\newcommand{\leqnomode}{\tagsleft@true}
\newcommand{\reqnomode}{\tagsleft@false}
\newtheorem{lemma}{Lemma}
\newtheorem{defi}{Definition}
\newtheorem{theorem}{Theorem}
\newtheorem{assumption}{Assumption}
\newcommand{\red}[1]{{\color{red}{#1}}}
\newcommand{\blue}[1]{{\color{blue}{#1}}}
\newcommand\numberthis{\addtocounter{equation}{1}\tag{\theequation}}
\newcommand\myeq{\stackrel{\mathclap{\tiny\mbox{i.i.d.}}}{\sim}}
\newcommand{\FullTitle}{Sharp-MAML: Sharpness-Aware Model-Agnostic Meta Learning}
\icmltitlerunning{Sharp-MAML: Sharpness-Aware Model-Agnostic Meta Learning}
\begin{document}

\twocolumn[
\icmltitle{\FullTitle}



\icmlsetsymbol{equal}{*}

\begin{icmlauthorlist}
\icmlauthor{Momin Abbas}{equal,rpi}
\icmlauthor{Quan Xiao}{equal,rpi}
\icmlauthor{Lisha Chen}{equal,rpi}
\icmlauthor{Pin-Yu Chen}{ibm}
\icmlauthor{Tianyi Chen}{rpi}
\end{icmlauthorlist}

\icmlaffiliation{rpi}{Rensselaer Polytechnic Institute, Troy, NY}
\icmlaffiliation{ibm}{IBM Thomas J. Watson Research Center, NY, USA}

\icmlcorrespondingauthor{Tianyi Chen}{chentianyi19@gmail.com}

\icmlkeywords{Machine Learning, ICML}

\vskip 0.3in
]



\printAffiliationsAndNotice{\icmlEqualContribution} 

\begin{abstract}
Model-agnostic meta learning (MAML) is currently one of the dominating approaches for few-shot meta-learning. Albeit its effectiveness, the optimization of MAML can be challenging due to the innate bilevel problem structure. Specifically, the loss landscape of MAML is much more complex with possibly  more saddle points and local minimizers than its empirical risk minimization counterpart.
To address this challenge, we leverage the recently invented sharpness-aware minimization and develop a sharpness-aware MAML approach that we term Sharp-MAML. We empirically demonstrate that Sharp-MAML and its computation-efficient variant can outperform the plain-vanilla MAML baseline (e.g., $+3\%$ accuracy on Mini-Imagenet). We complement the empirical study with the  convergence rate analysis and the generalization bound of Sharp-MAML. To the best of our knowledge, this is the first empirical and theoretical study on sharpness-aware minimization in the context of bilevel learning. The code is available at \url{https://github.com/mominabbass/Sharp-MAML}.
\end{abstract}

\section{Introduction}

Humans tend to easily learn new concepts using only a handful of samples. In contrast, modern deep neural networks require thousands of samples to train a model that generalizes well to unseen data \cite{Krizhevsky}. Meta learning is a remedy to such a problem whereby new concepts can be learned using a limited number of samples \citep{Schmidhuber, Vilalta}. Meta learning offers fast adaptation to unseen tasks \cite{Thrun, Novak} and has been widely studied to produce state of the art results in a variety of few-shot learning settings including language and vision tasks \cite{Munkhdalai, nichol2018_reptile, Snell, Wang3, Li3, Vinyals, Andrychowicz, Brock,Zintgraf,  Wang5, Achille, Li4, Hsu, Obamuyide}. In particular, model-agnostic meta learning (MAML) is one of the most popular optimization-based meta learning frameworks for few-shot learning \citep{Finn, Vuorio, Yin2, Obamuyide}. MAML aims to learn an initialization  such that after applying only a few number of gradient descent updates on the initialization, the adapted task-specific model can achieve desired  performance on the validation dataset. MAML has been successfully implemented in various data-limited applications including medical image analysis \cite{Maicas}, language modelling \cite{Huang2}, and object detection \cite{Wang4}.

Despite its recent success on some applications, MAML faces a variety of optimization challenges. 
For example, MAML incurs high computation cost due to second-order derivatives, requires searching for multiple hyperparameters, and is sensitive to neural network architectures \cite{Antoniou}. 
Even if various optimization techniques can potentially overcome these training challenges (e.g., making training error small), whether the  meta-model learned with limited training samples can lead to small generalization error or testing error in unseen tasks with unseen data, is not guaranteed~\cite{rothfuss2021_pacoh_pac_bayes_ml}.

These training and generalization challenges of MAML are partially due to the nested (e.g., \emph{bilevel}) structure of the problem, where the upper-level optimization problem learns shared model initialization and the lower-level problem optimizes task-specific models \citep{Finn,Rajeswaran}. 
This is in sharp contrast to the more widely known \emph{single-level} learning framework - empirical risk minimization (ERM). 
As a result, the training and generalization challenges in ERM will not only remain in MAML but may be also exacerbated by the bilevel structure of MAML. For example, as we will show later, the nonconvex loss landscape of MAML contains possibly more saddle points and local minimizers than its ERM counterpart, many of which do not have good generalization performance. Recent works have proposed various useful techniques to improve the generalization performance \cite{Grant, Park, Antoniou, kao2021maml}, but none of them are from the perspective of the optimization landscape.

Given the nested nature of \textit{bilevel} learning models such as MAML, this paper aims to answer the following question:
\begin{tcolorbox}
    \begin{center} 
    {\sf How can we find nonconvex bilevel learning models such as MAML that generalize well?}
    \end{center} 
\end{tcolorbox}

In an attempt to provide a satisfactory answer to this question, we use MAML as a concrete case of bilevel learning and incorporate a recently proposed sharpness-aware minimization (SAM) algorithm \cite{Foret} into the MAML baseline. 
Originally designed for \textit{single-level} problems such as ERM, SAM improves the generalization ability of non-convex models by leveraging the connection between generalization and sharpness of the loss landscape \cite{Foret}. 
We demonstrate the power of integrating SAM into MAML by: i) empirically showing that it outperforms the popular MAML baseline; and, ii) theoretically showing it leads to the potentially improved generalization bound.
To the best of our knowledge, this is the first study on sharpness-aware minimization in the context of bilevel optimization.

\subsection{Our contributions}

We summarize our contributions below.

\textbf{(C1) Sharpness-aware optimization for MAML with improved empirical performance.} 
We theoretically and empirically discover that the loss landscape of bilevel models such as MAML is more involved than its ERM counterpart with possibly more saddle points and local minimizers. To overcome this challenge, we develop a sharpness-aware MAML approach that we term Sharp-MAML and its computation-efficient variant. 
Intuitively, Sharp-MAML avoids the sharp local minima of MAML loss functions and achieves better generalization performance. 
We empirically demonstrate that Sharp-MAML can  outperform the plain-vanilla MAML baseline. 

\textbf{(C2) Optimization analysis of Sharp-MAML including MAML as a special case.} We establish the ${\cal O}(1/\sqrt{T})$ convergence rate of Sharp-MAML through the lens of recent bilevel optimization analysis \cite{Chen}, where $T$ is the number of iterations. This corresponds to ${\cal O}(\epsilon^{-2})$ sample complexity as a fixed number of samples are used per iteration. The convergence rate and sample complexity match those of training single-level ERM models, and improves the known ${\cal O}(\epsilon^{-3})$ sample complexity of MAML.

\textbf{(C3) Generalization analysis of Sharp-MAML demonstrating its improved generalization performance.} We quantify the generalization performance of models learned by Sharp-MAML through the lens of a recently developed probably approximately correct (PAC)-Bayes framework  \cite{farid2021generalization}. The generalization bound justifies the desired empirical performance of models learned from Sharp-MAML, and provides 
some insights on why models learned through Sharp-MAML can have better generalization performance than that from MAML.

\subsection{Technical challenges}
Due to the bilevel structure of both SAM and MAML, formally quantifying the optimization and generalization performance of Sharp-MAML is highly nontrivial.

Specifically, the state-of-the-art convergence analysis of bilevel optimization (e.g., \citep{Chen}) only applies to the case where the upper- and lower-level are both  minimization problems. Unfortunately, this  prerequisite is not satisfied in Sharp-MAML. In addition, the existing analysis of  MAML in \cite{fallah2020convergence} requires the growing batch size and thus results in a suboptimal ${\cal O}(\epsilon^{-3})$ sample complexity. From the theoretical perspective, this work not only broadens the applicability of the recent analysis of bilevel optimization  \citep{Chen} to tackle Sharp-MAML problems, but also tightens the analysis of the original MAML  \cite{fallah2020convergence}. 
For the generalization analysis of Sharp-MAML, different from the classical PAC-Bayes analysis for single-level problems as in SAM~\cite{Foret}, both the lower and upper level problems of MAML contribute to the  generalization error.
Going beyond the PAC-Bayes analysis in~\cite{Foret}, we further discuss how the choice of the perturbation radius {in SAM} affects the bound, providing insights on why Sharp-MAML improves over MAML in terms of generalization ability. 

\subsection{Related work}

We review related work from the following three aspects. 

\textbf{Loss landscape of non-convex optimization.} The connection between the flatness of minima and the generalization performance of the minimizers has been studied both theoretically and empirically; see e.g., \cite{Dziugaite, Dinh, Keshakar, Neyshabur}. In a recent study, \cite{Jiang2} has showed empirically that sharpness-based measure has the highest correlation with generalization. Furthermore, \citep{Izmailov} has showed that averaging model weights during training yields flatter minima that can generalize better.

\textbf{Sharpness-aware minimization.}
Motivated by the connection between sharpness of a minimum and generalization performance,  \cite{Foret} developed the SAM algorithm that encourages the learning algorithm to converge to a flat minimum, thereby improving its generalization performance. Recent follow-up works on SAM showed the efficacy of SAM in various settings. Notably, \cite{Bahri} used SAM to improve the generalization performance of language models like text-to-text Transformer  \cite{Raffel} and its multilingual counterpart \cite{Xue}. More importantly, they empirically showed that the gains achieved by SAM are even more when \emph{the training data are limited}. Furthermore, \citep{Chen2} showed that vision models such as transformers \cite{Dosovitskiy} and MLP-mixers \cite{Tolstikhin} suffer from sharp loss landscapes that can be better trained via SAM. They showed that the generalization performance of resultant models improves across various tasks including supervised, adversarial, contrastive, and transfer learning (e.g., 11.0$\%$ increase in top-1 accuracy). 
However, existing efforts have been focusing on improving generalization performance in \textit{single-level} problems such as ERM \cite{Bahri, Chen2}. 
Different from these works based on single-level ERM, we study SAM in the context of MAML through the lens of bilevel optimization. Recent works aim to reduce the computation overhead of SAM. In \citep{Du2}, two new variants of SAM have been proposed namely, \emph{Stochastic Weight Perturbation} and \emph{Sharpness-sensitive Data Selection}, both of which improve the efficiency of SAM without sacrificing generalization performance. While this work showed remarkable improvement on a standard ERM model, whether it can improve the computation overhead (without sacrificing  generalization) of a MAML-model is unknown.

\textbf{Model-agnostic meta learning.} 
Since it was first developed in \citep{Finn}, MAML has been one of the most popular optimization-based meta learning tools for fast few-shot learning.
Recent studies revealed that the choice of the lower-level optimizer affects the generalization performance of MAML \cite{Grant, Antoniou, Park}. \citep{Antoniou} pointed out a variety of issues of training MAML, such as sensitivity to neural network architectures that leads to instability during training and high computational overhead at both training and inference times. They proposed multiple ways to improve the generalization error, and stabilize training MAML, calling the resulting framework MAML++. Many recent works focus on 
analyzing the generalization ability of MAML~\cite{farid2021generalization,denevi2018_l2l_linear_centroid,rothfuss2021_pacoh_pac_bayes_ml,chen2022bayes_maml} and 
improving the generalization performance of MAML \cite{Finn2, Gonzalez, Park}. 
However, these works do not take into account the geometry of the loss landscape of MAML. In addition to generalization-ability, recent works \citep{Wang, Goldblum, Xu} investigated MAML from another important perspective of \emph{adversarial robustness} the capabilities of a model to defend against adversarial perturbed inputs (also known as adversarial attacks in some literature). However, we focus on improving the generalization performance of the models trained by MAML with theoretical guarantees.

\section{Preliminaries and Motivations}

In this section, we first review the basics of MAML and describe the optimization difficulty of learning MAML models, followed by introducing the SAM method.

 \subsection{{Problem formulation of MAML}} \label{sec2.1}

The goal of few-shot learning is to train a model that can quickly adapt to a new task using only a few datapoints (usually 1-5 samples per task). Consider ${M}$ few-shot learning tasks $\{\mathcal{T}_{m}\}_{m=1}^{M}$ drawn from a distribution $p(\mathcal{T})$. Each task $m$ has a fine-tuning training set $\mathcal{D}_m = \cup^{n}_{i=1}\{(x_i,y_i)\}$ and a separate validation set $\mathcal{D}_m^{\prime}=\cup^{n}_{i=1}\{(x_i,y_i)\}$, where data are independently and identically distributed (i.i.d.) and drawn from the per-task data distribution $\mathcal{P}_m$.
MAML seeks to learn a good initialization of the model parameter $\theta$ (called the meta-model) such that fine-tuning $\theta$ via a small number of gradient updates will lead to fast learning on a new task. Consider a per datum loss $l:\Theta\times \mathcal{X} \times \mathcal{Y} \rightarrow \mathbb{R}_{+}$; define the generic \emph{empirical loss} over a finite-sample dataset $\mathcal{D}$ as $\mathcal{L} (\theta;\mathcal{D}) = \frac{1}{n} \sum_{i=1}^{n} l(\theta, x_i, y_i)$ and the generic \emph{population loss} over a data distribution $\mathcal{P}$ as $\mathcal{L} (\theta;{\mathcal{P}})=\mathbb{E}_{(x, y)\sim \mathcal{P}}[l(\theta, x, y)]$. For a particular task $m$, they become  $\mathcal{L} (\theta;\mathcal{D}_m)$ or $\mathcal{L} (\theta;\mathcal{D}_m')$ and $\mathcal{L} (\theta;{\mathcal{P}_m})$.

MAML can be formulated as a bilevel optimization problem, where the fine-tuning stage forms a task-specific lower-level problem while the meta-model $\theta$ optimization forms a shared upper-level problem. 
Namely, the optimization problem of MAML is \citep{Rajeswaran}: 
\begin{subequations}  \label{MAML}
\begin{align} 
  \!\!  \min_{\theta} &~~\frac{1}{M} \sum_{m=1}^{M} \mathcal{L} (\theta_{m}^{*}(\theta); \mathcal{D}_m^{\prime})  \label{MAMLa} \\
  \!\!     {\rm s. t.} &\,\,\, \theta_{m}^{*}(\theta) \!=\! \arg \min_{\theta_{m}} \mathcal{L}  (\theta_{m}; \mathcal{D}_m) \!+\! \frac{\|\theta_m-\theta\|^2}{2\beta_{\rm low}},\, \forall{m}  \!  \label{MAMLb}
\end{align}
\end{subequations}
where $\beta_{\rm low}$ denotes the lower-level step size.

The bilevel optimization problem in \eqref{MAML} is difficult to solve because each upper-level update \eqref{MAMLa} requires calling lower optimization oracle multiple times \eqref{MAMLb}. There exist many MAML algorithms to solve \eqref{MAML} efficiently, such as Reptile \citep{nichol2018_reptile} and first-order MAML \cite{Finn} which is an approximation to MAML obtained by ignoring second-order derivatives. We instead use the one-step gradient update \cite{Finn} to approximate the lower-level problem: 
\begin{equation} \label{eq2}
\min_{\theta}  ~F(\theta) \triangleq {\rm \eqref{MAMLa}} ~~~{\rm s. t.}~ \theta_{m}^{\prime}(\theta) = \theta - \beta_{\rm low} \nabla \mathcal{L} (\theta; \mathcal{D}_m).
\end{equation}

\begin{figure}[t]
    \centering
    \includegraphics[width=.5\textwidth]{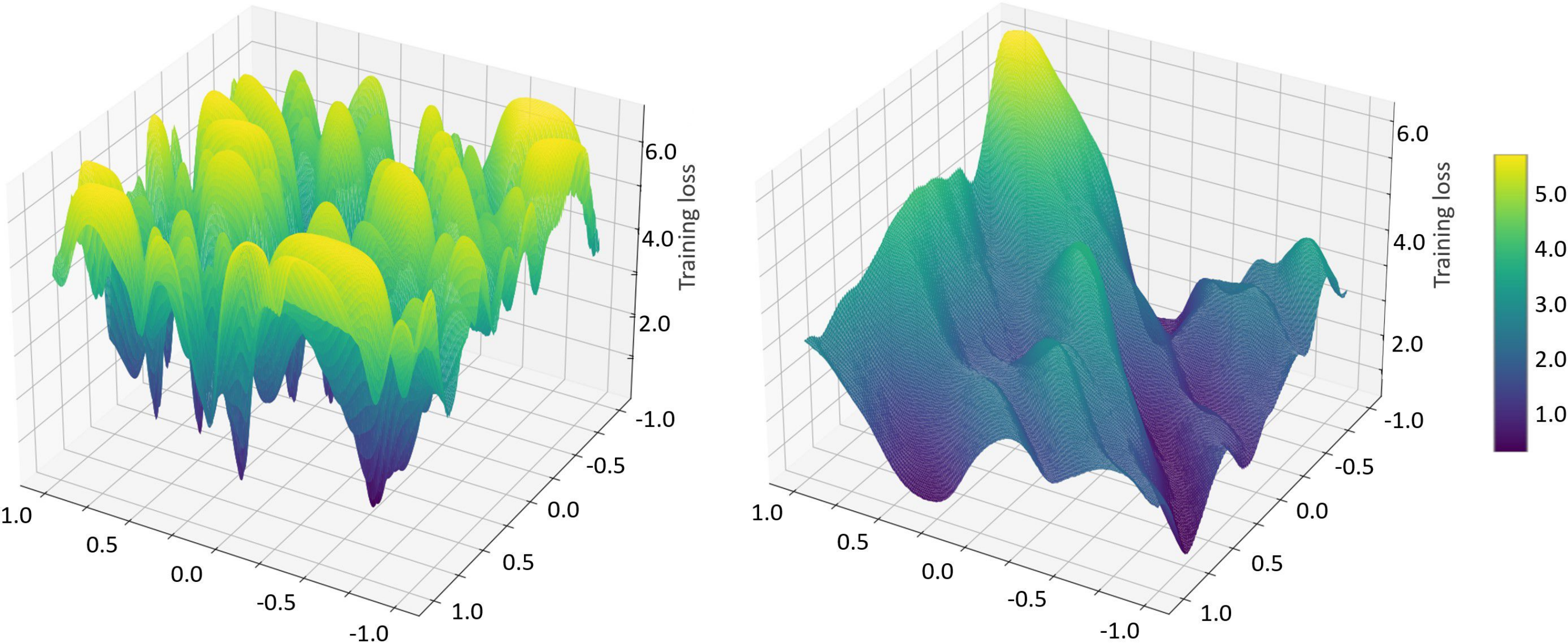}
    \vspace{-0.4cm}
    \caption{Loss landscapes of MAML and ERM for a single task on CIFAR-100 dataset. Details of the architecture are given in Section \ref{sec5.1}. 
We use the cross-entropy loss following the process in \citep{Li9}. \textbf{Left:} Loss landscape of a MAML model (5-way 1-shot). \textbf{Right:} Loss landscape of a standard ERM model.}
    \vspace{-0.2cm}
    \label{figure1}
\end{figure}

\textbf{Generalization performance.} 
We are particularly interested in the generalization performance of a meta-model $\theta$ obtained by solving the empirical MAML problem \eqref{eq2}. 
The generalization performance of a meta-model $\theta$ is measured by the expected population loss 
\begin{align}  \label{MAMLc}
\!\mathcal{L}(\theta_m(\theta); \mathcal{P})\triangleq\mathbb{E}_{\mathcal{T}_{m}}\mathbb{E}_{(x,y)\sim \mathcal{P}_m}[\mathcal{L} (\theta_m(\theta;\mathcal{D}_m); \mathcal{D}_m')]
\end{align}
where the expectation is taken over the randomness of the  sampled tasks as well as data in the training and validation datasets per sampled task. 
For notation simplicity, we define the marginal data distribution for variable $(x,y)$ as 
\begin{align}
    \mathcal{P}(x,y) \triangleq\mathbb{E}_{\mathcal{T}_m}[\mathcal{P}_m(x,y)]= \int P(x,y\mid \mathcal{T}_m) P(\mathcal{T}_m) d \mathcal{T}_m
\end{align}
and we use $\mathcal{P}$ to represent $\mathcal{P}(x,y)$ thereafter.
 
 \subsection{Local minimizer of ERM implies that of MAML}
 Nevertheless, even with the simple lower-level gradient descent step \eqref{eq2}, training the upper-level meta-model $\theta$ still requires differentiating the lower update. In other words, the meta-model requires the second-order information (i.e., the Hessian) of the objective function with respect to $\theta$, making the problem \eqref{MAML} more involved than an ERM formulation of the multi-task learning (called ERM thereafter), given by
\begin{equation} \label{ERM}
    \min_{\theta}~ \frac{1}{M} \sum_{m=1}^{M} \mathcal{L} (\theta; \mathcal{D}_m).
\end{equation}
To understand the difficulty of the MAML objective in \eqref{eq2}, we visualize its loss landscape of a particular task $m$ and compare it with that of ERM for the same task $m$. Figure \ref{figure1} shows the per-task loss landscapes of a meta-model $\theta$ in \eqref{MAMLa} and a standard ERM model in \eqref{ERM} on CIFAR-100 dataset. We find that the loss landscape of a meta-model is indeed much more involved with more local minima, making the optimization problem difficult to solve. The following lemma also characterizes the complex landscape of meta-model on a particular task $m$ and its proof is deferred in Appendix \ref{pf:lm1}.

\begin{lemma}[Local minimizers of MAML]\label{maml-infer}
Consider the one-step gradient fine-tuning step $\eqref{eq2}$.
For any $m\in\mathcal{M}$, assume $\mathcal{L}(\theta;\mathcal{D}_m)$ has continuous third-order derivatives. Then for a particular task $m$, the following two statements hold\\
a) the stationary points for $\mathcal{L}(\theta;\mathcal{D}_m)$ are also the stationary points for $\mathcal{L}(\theta_m^\prime(\theta);\mathcal{D}_m)$; and,\\
b) the local minimizers for $\mathcal{L}(\theta;\mathcal{D}_m)$ are also the local minimizers for $\mathcal{L}(\theta_m^{\prime}(\theta);\mathcal{D}_m)$. 
\end{lemma}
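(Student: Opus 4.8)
Fix the task $m$, abbreviate $f(\theta):=\mathcal{L}(\theta;\mathcal{D}_m)$, and write $\phi(\theta):=\theta-\beta_{\rm low}\nabla f(\theta)$ for the one-step lower-level map, so that $\theta_m'(\theta)=\phi(\theta)$ and $g(\theta):=\mathcal{L}(\theta_m'(\theta);\mathcal{D}_m)=f(\phi(\theta))$. Since $f\in C^3$, the map $\phi$ is $C^2$ and hence $g$ is $C^2$, so every derivative used below is well defined. The single fact driving both parts is that \emph{every stationary point of $f$ is a fixed point of $\phi$}: if $\nabla f(\bar\theta)=0$ then $\phi(\bar\theta)=\bar\theta-\beta_{\rm low}\cdot 0=\bar\theta$, and therefore $\nabla f(\phi(\bar\theta))=\nabla f(\bar\theta)=0$ as well.

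For part a) the plan is to apply the chain rule, using that $\nabla^2 f$ is symmetric, to obtain $\nabla g(\theta)=\big(I-\beta_{\rm low}\nabla^2 f(\theta)\big)\,\nabla f(\phi(\theta))$. If $\bar\theta$ is stationary for $f$, the fixed-point identity gives $\nabla f(\phi(\bar\theta))=0$, hence $\nabla g(\bar\theta)=0$; that is, $\bar\theta$ is stationary for $g$.

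For part b), let $\bar\theta$ be a local minimizer of $f$. Then $\nabla f(\bar\theta)=0$ by differentiability, so $\phi(\bar\theta)=\bar\theta$ by the above, and by definition there is an open neighborhood $V\ni\bar\theta$ with $f(\theta')\ge f(\bar\theta)$ for all $\theta'\in V$. Since $\phi$ is continuous and $\phi(\bar\theta)=\bar\theta\in V$, the preimage $U:=\phi^{-1}(V)$ is an open neighborhood of $\bar\theta$, and for every $\theta\in U$ we get $g(\theta)=f(\phi(\theta))\ge f(\bar\theta)=f(\phi(\bar\theta))=g(\bar\theta)$. Hence $\bar\theta$ is a local minimizer of $g$, attaining there the same value $f(\bar\theta)$.

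The one step deserving care — and which I view as the crux — is the passage from ``$\bar\theta$ locally minimizes $f$'' to ``$\phi$ maps some neighborhood of $\bar\theta$ into the set where $f\ge f(\bar\theta)$'': this rests precisely on the fixed-point property $\phi(\bar\theta)=\bar\theta$ together with continuity of $\phi$, which is also why the inclusion holds for \emph{every} lower-level step size $\beta_{\rm low}$. Everything else is a routine chain-rule computation; the $C^3$ hypothesis is in fact stronger than the argument above needs, but it becomes genuinely useful for a second-order restatement: differentiating $\nabla g$ once more at a stationary $\bar\theta$ annihilates the $\nabla^3 f$ term, since it is multiplied by $\nabla f(\phi(\bar\theta))=0$, leaving $\nabla^2 g(\bar\theta)=\big(I-\beta_{\rm low}\nabla^2 f(\bar\theta)\big)\nabla^2 f(\bar\theta)\big(I-\beta_{\rm low}\nabla^2 f(\bar\theta)\big)$, which is positive semidefinite whenever $\nabla^2 f(\bar\theta)$ is. Finally, it is worth stressing that the lemma asserts only an inclusion: the factor $I-\beta_{\rm low}\nabla^2 f$ can create \emph{additional} stationary points and minimizers of $g$ that are absent from $f$, which is exactly the extra landscape complexity visualized in Figure~\ref{figure1}.
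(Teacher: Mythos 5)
Your proof is correct. Part a) coincides with the paper's argument: both apply the chain rule to obtain $\nabla g(\theta) = (I-\beta_{\rm low}\nabla^2\mathcal{L}(\theta;\mathcal{D}_m))\,\nabla\mathcal{L}(\theta_m'(\theta);\mathcal{D}_m)$ and evaluate at a stationary point of $\mathcal{L}(\cdot\,;\mathcal{D}_m)$, where the fixed-point identity $\theta_m'(\bar\theta)=\bar\theta$ kills the second factor. Part b) is where you genuinely diverge. The paper differentiates once more, observes that the $\nabla^3\mathcal{L}$ term is annihilated by $\nabla\mathcal{L}(\theta_m'(\bar\theta);\mathcal{D}_m)=0$, obtains $\nabla^2 g(\bar\theta)=(I-\beta_{\rm low}\nabla^2\mathcal{L}(\bar\theta;\mathcal{D}_m))^2\nabla^2\mathcal{L}(\bar\theta;\mathcal{D}_m)\succeq 0$, and concludes local minimality of $g$ from the first- and second-order conditions; you instead argue directly from the definition, using that a local minimizer of $f$ is a fixed point of the one-step map $\phi$ and that continuity of $\phi$ pulls the neighborhood on which $f\ge f(\bar\theta)$ back to a neighborhood on which $g=f\circ\phi\ge g(\bar\theta)$. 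Your route buys two things: it is more elementary (only $C^1$ regularity of $\mathcal{L}$ is needed for part b), not $C^3$), and it is strictly more rigorous, since the paper's conclusion rests on verifying the second-order \emph{necessary} conditions for $g$, which do not by themselves imply that $\bar\theta$ is a local minimizer (consider $-x^4$ at the origin); your topological argument closes that small gap. What the paper's computation buys in exchange is the explicit formula for $\nabla^2 g(\bar\theta)$, which you also derive as a side remark and which is informative about curvature flattening near minima. Your closing observation that the lemma asserts only an inclusion, and that the factor $I-\beta_{\rm low}\nabla^2\mathcal{L}$ can introduce additional critical points of $g$ absent from $f$, is exactly the reading the paper intends when invoking the lemma to explain Figure~\ref{figure1}.
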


Lemma \ref{maml-infer} shows that for a given task $m$, the number of stationary points and local minimizers for ERM's loss
$\mathcal{L}(\theta;\mathcal{D}_m)$ are fewer than that of MAML's loss $\mathcal{L}(\theta_m^{\prime}(\theta);\mathcal{D}_m)$, which is aligned with the empirical observations in Figure \ref{figure1}. 
While some of the local minimizers in MAML's loss landscape are indeed effective few-shot learners, there are a number of sharp local minimizers in MAML that may have undesired generalization performance.
It also suggests that the optimization of MAML can be more challenging than its ERM counterpart.

   

\subsection{Sharpness aware minimization}
SAM is a recently developed technique that leverages the geometry of the loss landscape to improve the generalization performance by simultaneously minimizing the loss value and the loss sharpness \cite{Foret}. Given the empirical loss $\mathcal{L}(\theta; \mathcal{D})$, the goal of training is to choose $\theta$ having low population loss $\mathcal{L} (\theta;{\mathcal{P}})$. SAM achieves this through the following optimization problem
\begin{equation}  \label{sam}
    \min_{\theta}~ \mathcal{L}^{\rm sam}(\theta;\mathcal{D}) ~{\rm with}~ \mathcal{L}^{\rm sam}(\theta;\mathcal{D}) \!\triangleq\! \max_{||\epsilon||_{2} \leq \alpha}  \mathcal{L}(\theta + \epsilon;\mathcal{D}).
\end{equation}
Given $\theta$, the maximization in \eqref{sam} seeks to find the weight perturbation $\epsilon$ in the Euclidean ball with radius $\alpha$ that maximizes the empirical loss. If we define the \textbf{sharpness} as
\begin{equation}
    \max_{||\epsilon||_2 \leq \alpha}~ [\mathcal{L}(\theta + \epsilon;\mathcal{D})- \mathcal{L}(\theta;\mathcal{D})]
\end{equation}
then \eqref{sam} essentially minimizes the sum 
of the sharpness and the empirical loss $\mathcal{L}(\theta;\mathcal{D})$. 
While the maximization in \eqref{sam} is generally costly, a closed-form approximate maximizer has been proposed in \cite{Foret} by invoking the Taylor expansion of the empirical loss.  
In such case, SAM seeks to find a flat minimum  by iteratively applying the following two-step procedure at each iteration $t$, that is
\begin{subequations}  \label{eq5}
\begin{align}
&{\epsilon}(\theta^{t}) = \alpha{\nabla \mathcal{L}(\theta^{t};\mathcal{D})}/{||\nabla \mathcal{L}(\theta^{t};\mathcal{D})||_2}  \label{eq5b}\\
	&\theta^{t+1} = \theta^{t} - \beta^{t} (\nabla \mathcal{L}(\theta^{t} + {\epsilon}(\theta^{t});\mathcal{D}))\label{eq5a}
\end{align}
\end{subequations}
where $\beta^{t}$ is an appropriately scheduled learning rate. 
In \eqref{eq5a} and thereafter, the notation $\nabla \mathcal{L}(\theta+\epsilon_m(\theta))$ means 
$\nabla \mathcal{L}(\theta+\epsilon_m(\theta))\triangleq\nabla_x \mathcal{L}(x) |_{ x=\theta+\epsilon_m(\theta)}$.
SAM works particularly well for complex and non-convex problems having a myriad of local minima, and where different minima yield models with different generalization abilities.

\section{Sharp-MAML: Sharpness-aware MAML} \label{sec3}

As discussed in Section \ref{sec2.1}, MAML has a complex loss landscape with multiple local and global minima, that may yield similar values of empirical loss $\mathcal{L}(\theta;\mathcal{D})$ while having significantly different generalization performance. Therefore, we propose integrating SAM with MAML, which is a new bilevel optimization problem.

\subsection{Problem formulation of Sharp-MAML}
We propose a unified optimization framework for Sharpness-aware MAML that we term Sharp-MAML by using two hyperparameters $\alpha_{\rm up} \geq 0$ and  $\alpha_{\rm low} \geq 0$, that is:
\begin{tcolorbox}
\leqnomode
\vspace{-0.3cm}
\begin{align}\label{SAM_MAML}
\hspace{-2cm}\tag{\bf P}\hspace{-1cm}& ~~~~~~~ \min_{\theta} \textcolor{blue}{\max_{||\epsilon||_2 \leq \alpha_{\rm up}}}  \sum_{m=1}^{M} \mathcal{L}  (\theta_{m}^{*}(\theta+\textcolor{blue}{\epsilon}); \mathcal{D}_m^{\prime})~~~\, \text{\sf (upper)}\hspace{-2cm}\\
&{\rm {\rm s. t.}}~~\theta_{m}^{*}(\theta) = \arg \min_{\theta_m}  \textcolor{red}{\max_{||\epsilon_m||_2 \leq \alpha_{\rm low}}} \!\mathcal{L} (\theta_{m}+\textcolor{red}{\epsilon_m}; \mathcal{D}_m)\nonumber\\
&~~~~~~~~~~~~~~~~ ~+ \frac{\|\theta_m-\theta\|^2}{2\beta_{\rm low}},~  {m}=1,...,M.~~~~~~\text{\sf (lower)} \nonumber
\end{align}
\reqnomode
\end{tcolorbox}
Compared with the bilevel formulation of MAML in \eqref{MAML}, the above Sharp-MAML formulation is a four-level problem. However, in our algorithm design, we will efficiently approximate the two maximizations in \eqref{SAM_MAML} so that the cost of Sharp-MAML is almost the same as that of MAML.

In what follows, we list three main technical questions that we aim to address.

\textbf{Q1.} The choice of $\alpha_{\rm up}$, $\alpha_{\rm low}$ determines the specific scenario of integrating SAM with MAML. Applying SAM to both fine-tuning and meta-update stages would be computationally very expensive. Spurred by that, we ask: \emph{Is it possible to achieve better generalization by incorporating SAM into only either upper- or lower-level problem?}

\textbf{Q2.} Both MAML in \eqref{MAML} and SAM in \eqref{sam} are bilevel optimization problems requiring several lower optimization steps. Thus, we also study whether or not the computationally-efficient alternatives (e.g. ESAM \cite{Du2}, ANIL \citep{Raghu}) can promise good generalization.

\textbf{Q3.} The theoretical motivation for SAM has been illustrated in \citep{Foret} by bounding generalization ability in terms of neighborhood-wise training loss. Spurred by that, we further ask: \emph{Can we explain {and theoretically justify} why integrating SAM with MAML is effective in promoting generalization performance of MAML models?}

\subsection{Algorithm development} \label{sec3.2}
Based on \eqref{SAM_MAML}, we focus on three variants of Sharp-MAML that differ in their respective computational complexity:

{\bf (a) Sharp-MAML$_{\rm low}$}: SAM is applied to only the fine-tuning step, i.e., $\alpha_{\rm low} > 0$ and $\alpha_{\rm up}=0$.\\
{\bf (b) Sharp-MAML$_{\rm up}$}: SAM is applied to only the meta-update step, i.e., $\alpha_{\rm low} = 0$ and $\alpha_{\rm up}>0$.\\
{\bf (c) Sharp-MAML$_{\rm both}$}: SAM is applied to both fine-tuning and meta-update steps, i.e., $\alpha_{\rm up}$, $\alpha_{\rm low} > 0$.

Below we only introduce Sharp-MAML$_{\rm both}$ in detail and leave the pseudo-code of the other two variants in Appendix \ref{appendix::A} since the other two variants can be deduced from Sharp-MAML$_{\rm both}$. 
For the sake of convenience, we define the biased mini-batch gradient descent (BGD) at point $\theta^t+\epsilon$ using gradient at $\theta^t+\epsilon+\epsilon_m$  as
\begin{equation}  \label{perturb}
    {\rm BGD}_m(\theta^t,\epsilon,\epsilon_m) \triangleq \theta^t+\epsilon - \beta_{\rm low} \widetilde\nabla \mathcal{L} ({\theta^t+\epsilon+\epsilon_m}; \mathcal{D}_{m})
\end{equation}
where $\epsilon$ and $\epsilon_m$ are perturbation vectors that can be computed accordingly to different Sharp-MAML, and $\widetilde\nabla\mathcal{L}(~\cdot~;\mathcal{D}_m)$ is an unbiased estimator of $\nabla\mathcal{L}(~\cdot~;\mathcal{D}_m)$ which can be assessed by mini-batch evaluation. Moreover, letting $\tilde\theta_m(\theta^t)={\rm BGD}_m(\theta^t,\epsilon,\epsilon_m)$, we define 
\begin{align}
  \!\!  &  \!\! \nabla_{\theta_t}\mathcal{L}(\tilde\theta_m(\theta^t);\mathcal{D}_m^\prime)\triangleq\nonumber\\
  \!\!     &~(I\!-\!\beta_{\rm low}\nabla^2\mathcal{L}(\theta^t+\epsilon\!+\!\epsilon_m;\mathcal{D}_m))\nabla\mathcal{L}(\tilde\theta_m(\theta^t);\mathcal{D}_m^\prime) 
\end{align}
and $\nabla^2\mathcal{L}(\theta^t+\epsilon+\epsilon_m;\mathcal{D}_m)$ is the Hessian matrix of $\mathcal{L}(~\cdot~;\mathcal{D}_m)$ at $\theta^t+\epsilon+\epsilon_m$. 

\paragraph{Sharp-MAML$_{\rm both}$.} For each task $m$, we compute its corresponding perturbation ${\epsilon_{m}}(\theta^t)$ as follows:
 \begin{equation}  \label{eq7}
  \epsilon_m (\theta^t) = \alpha_{\rm low}{\widetilde\nabla \mathcal{L} (\theta^t; \mathcal{D}_m)}/{||\widetilde\nabla \mathcal{L} (\theta^t; \mathcal{D}_m)||_2}.
\end{equation}

Thereafter, the fine-tuning step is carried out by performing gradient descent at $\theta^t$ using the gradient at the maximum point $\theta^t+ {\epsilon_{m}} (\theta^t)$ using \eqref{perturb}:
\begin{equation}  \label{eq8}
    \tilde\theta_{m}^1(\theta^t) = {\rm BGD}_m(\theta^t,0,\epsilon_m(\theta^t)).
\end{equation}

 \begin{algorithm}[htb]
    \normalsize
	\caption{Pseudo-code for {\bf Sharp-MAML$_{\rm both}$}; \colorbox{red!30}{red lines} need to be modified for \red{\bf Sharp-MAML$_{\rm up}$}; \colorbox{blue!30}{blue lines} need to be modified for \blue{\bf Sharp-MAML$_{\rm low}$}}
	\label{alg4}
	\begin{algorithmic}[1]
	\State {\bf Require:} $p( \mathcal{T})$: distribution over tasks 
	\State {\bf Require:} $\beta_{\rm low}, \beta_{\rm up}$:   step sizes
	\State {\bf Require:} $\alpha_{\rm low}>0, \alpha_{\rm up} >0$: perturbation radii
    	\For{$t=1,\cdots, T$}
            \State Sample batch of tasks $\mathcal{T}_{m} \sim p(\mathcal{T})$
            \For{{\bf all} $\mathcal{T}_{m}$}
            \State Sample $K$ examples from  $\mathcal{D}_{m}$
            \State Evaluate $\widetilde\nabla \mathcal{L}({\theta^t}; \mathcal{D}_{m})$ 
            \State \colorbox{red!30}{Compute perturbation ${\epsilon_m}(\theta^t)$ via \eqref{eq7}}
            \State \colorbox{red!30}{Compute fine-tuned parameter $\tilde\theta_{m}^1(\theta^t)$ via \eqref{eq8}}
            \State Sample data from $\mathcal{D}_m^{\prime}$ for meta-update
            \EndFor
        \State \colorbox{blue!30}{Compute $\sum_{m=1}^{M} \widetilde \nabla \mathcal{L}(\tilde\theta_{m}^1(\theta^t);\mathcal{D}_m^{\prime})$ }
        \State \colorbox{blue!30}{Compute perturbation ${\epsilon}(\theta^t)$ via \eqref{eq9}}
        \State \colorbox{blue!30}{Update $\theta$ via \eqref{eq12} using $\hat\theta_{m}^{2}(\theta^t)$ from \eqref{eq10}}
        \EndFor
    \end{algorithmic} 
\end{algorithm}

After we obtain $\tilde\theta_m^1(\theta^t)$ for all tasks, we compute the mini-batch gradient estimator of the upper loss i.e., $\nabla h = \widetilde\nabla_{\theta^t} \sum_{m=1}^{M} \mathcal{L}(\tilde\theta_{m}^1(\theta^t);\mathcal{D}_m^{\prime})$ which is an unbiased estimator of the upper-level gradient  $\nabla_{\theta^t} \sum_{m=1}^{M} \mathcal{L}(\tilde\theta_{m}^1(\theta^t);\mathcal{D}_m^{\prime})$, and use it to compute the meta perturbation $\epsilon (\theta^t)$ by:
\begin{equation}  \label{eq9}
  \epsilon (\theta^t) =  \alpha_{\rm up}{\nabla h}/{||\nabla h||_2}.
\end{equation}

Afterwards, we compute the new perturbed fine-tuned parameter, denoted by $\tilde\theta_{m}^2(\theta^t)$, using the gradient at the maximum point $\theta^t+\epsilon(\theta^t)+\epsilon_m(\theta^t)$ in \eqref{perturb}, that is:
\begin{equation}  \label{eq10}
    \tilde\theta_{m}^2(\theta^t) = {\rm BGD}_m(\theta^t,\epsilon(\theta^t),\epsilon_m(\theta^t)).
\end{equation}
Finally, for the meta-update stage, we evaluate the upper loss using the fine-tuned parameter $\tilde\theta_{m}^{2}(\theta^t)$ obtained from \eqref{eq10} and update the meta-parameter $\theta$ via:
\begin{equation}  \label{eq12}
    \theta^{t+1}= \theta^t - \beta_{\rm up} \sum_{m=1}^{M} \widetilde\nabla_{\theta^t} \mathcal{L}(\tilde\theta_{m}^2(\theta^t);\mathcal{D}_m^{\prime}).
\end{equation}

See the pseudocode of Sharp-MAML$_{\rm both}$ in Algorithm \ref{alg4}. The algorithms for Sharp-MAML$_{\rm up}$ and Sharp-MAML$_{\rm low}$ can be deduced by setting $\epsilon_m(\theta_t) = 0$ and $\epsilon(\theta_t) = 0$ in Algorithm \ref{alg4}, respectively, which are formally stated in Algorithm \ref{alg3} and Algorithm \ref{alg1} in Appendix \ref{appendix::A}.

\section{Theoretical Analysis of Sharp-MAML}
In this part, we rigorously analyze the performance of the proposed Sharp-MAML method in terms of the convergence rate and the generalization error.

\subsection{Optimization analysis}
To quantify the optimization performance of solving the one-step version of \eqref{MAML}, we introduce the following  assumptions.

\begin{assumption}[Lipschitz continuity]\label{li-lip}
Assume that $\mathcal{L}(\theta;\mathcal{D}_m^\prime),\nabla\mathcal{L}(\theta;\mathcal{D}_m),\nabla\mathcal{L}(\theta;\mathcal{D}_m^\prime),\nabla^2\mathcal{L}(\theta;\mathcal{D}_m)$, $\forall m$ are Lipschitz continuous with constant $\ell_0,\ell_1,\ell_1,\ell_2$.
\end{assumption}

\begin{assumption}[Stochastic derivatives]\label{variance}
Assume that $\widetilde\nabla\mathcal{L}(\theta;\mathcal{D}_m),\widetilde\nabla^2\mathcal{L}(\theta;\mathcal{D}_m),\widetilde\nabla\mathcal{L}(\theta;\mathcal{D}_m^\prime)$ are unbiased estimator of $\nabla\mathcal{L}(\theta;\mathcal{D}_m),\nabla^2\mathcal{L}(\theta;\mathcal{D}_m),\nabla\mathcal{L}(\theta;\mathcal{D}_m^\prime)$ respectively and their variances are bounded by $\sigma^2$.
\end{assumption}
Assumptions \ref{li-lip}--\ref{variance} also appear similarly in the convergence analysis of meta learning and bilevel optimization \cite{finn2019online,Rajeswaran,fallah2020convergence,Chen,ji2022theoretical,chen2021stable}. 

With the above assumptions, we introduce a novel biased MAML framework which includes MAML and sharp-MAML as special cases and get the following theorem. The proof is deferred in Appendix C. 
\begin{theorem}\label{smaml-con}
Under Assumption \ref{li-lip}--\ref{variance}, and choosing stepsizes and perturbation radii $\beta_{\rm low},\beta_{\rm up},\alpha_{\rm up}={\cal O}(\frac{1}{\sqrt{T}}),\alpha_{\rm low}={\cal O}(1)$,  with some proper constants, we can get that the iterates $\{\theta^t\}$ generated by Sharp-MAML$_{\rm up}$, Sharp-MAML$_{\rm low}$ and Sharp-MAML$_{\rm both}$ satisfy
    \begin{equation}
        \frac{1}{T}\sum_{t=1}^T\mathbb{E}\left[\|\nabla F(\theta^t)\|^2\right]={\cal O}\left(\frac{1}{\sqrt{T}}\right)
    \end{equation}
    where $F(\theta)$ is the objective function of MAML in \eqref{eq2}.
\end{theorem}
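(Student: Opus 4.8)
The plan is to cast Sharp-MAML$_{\rm up}$, Sharp-MAML$_{\rm low}$, and Sharp-MAML$_{\rm both}$ as instances of a single abstract \emph{biased} stochastic bilevel scheme of the form $\theta^{t+1} = \theta^t - \beta_{\rm up}\,\widetilde{g}^t$, where $\widetilde{g}^t$ is an estimator of $\nabla F(\theta^t)$ with a bias that we can control. Concretely, write $g^t \triangleq \mathbb{E}[\widetilde g^t \mid \theta^t]$ and decompose $g^t = \nabla F(\theta^t) + b^t$, where $b^t$ is the total bias coming from three sources: (i) using the one-step lower-level update $\theta_m'(\theta)=\theta-\beta_{\rm low}\nabla\mathcal{L}(\theta;\mathcal{D}_m)$ in place of the exact argmin of \eqref{MAMLb} — this is the standard MAML approximation already present in \eqref{eq2}; (ii) the lower-level perturbation $\epsilon_m(\theta^t)$ of radius $\alpha_{\rm low}$ in \eqref{eq7}; and (iii) the upper-level perturbation $\epsilon(\theta^t)$ of radius $\alpha_{\rm up}$ in \eqref{eq9}–\eqref{eq10}. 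The first step is therefore to bound $\|b^t\|$ uniformly. Using Assumption~\ref{li-lip}: the upper-level perturbation contributes a term of order $\alpha_{\rm up}$ because $\mathcal{L}(\cdot;\mathcal{D}_m')$ has $\ell_1$-Lipschitz gradient (so perturbing the evaluation point by $\le\alpha_{\rm up}$ changes the gradient by $\mathcal{O}(\alpha_{\rm up})$) and $(I-\beta_{\rm low}\nabla^2\mathcal{L})$ is bounded in operator norm; the lower-level perturbation contributes $\mathcal{O}(\beta_{\rm low}\alpha_{\rm low})$ because the perturbation enters $\tilde\theta_m(\theta^t)$ only through the factor $\beta_{\rm low}\widetilde\nabla\mathcal{L}(\theta^t+\epsilon+\epsilon_m;\mathcal{D}_m)$ and through the Hessian evaluation point, using $\ell_1$- and $\ell_2$-Lipschitzness respectively. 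Hence $\|b^t\| = \mathcal{O}(\alpha_{\rm up} + \beta_{\rm low}\alpha_{\rm low})$, which is $\mathcal{O}(1/\sqrt T)$ under the prescribed choices $\alpha_{\rm up}=\mathcal{O}(1/\sqrt T)$, $\beta_{\rm low}=\mathcal{O}(1/\sqrt T)$, $\alpha_{\rm low}=\mathcal{O}(1)$.

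The second step is to establish smoothness of the surrogate objective $F$ in \eqref{eq2}, i.e.\ an $L_F$-Lipschitz-gradient bound, together with a variance bound $\mathbb{E}[\|\widetilde g^t - g^t\|^2 \mid \theta^t] = \mathcal{O}(\sigma^2)$ on the stochastic estimator. Both are by now routine consequences of Assumptions~\ref{li-lip}–\ref{variance}: the gradient of $\mathcal{L}(\theta_m'(\theta);\mathcal{D}_m')$ is a product of $(I-\beta_{\rm low}\nabla^2\mathcal{L}(\theta;\mathcal{D}_m))$ and $\nabla\mathcal{L}(\theta_m'(\theta);\mathcal{D}_m')$, and Lipschitzness of this product follows from $\ell_1$-Lipschitzness of the first-order terms, $\ell_2$-Lipschitzness of the Hessian, boundedness of the factors, and the chain rule; the variance bound follows because $\widetilde g^t$ is built from $\widetilde\nabla\mathcal{L}(\cdot;\mathcal{D}_m)$, $\widetilde\nabla^2\mathcal{L}(\cdot;\mathcal{D}_m)$, $\widetilde\nabla\mathcal{L}(\cdot;\mathcal{D}_m')$, each with variance $\le\sigma^2$, combined with bounded deterministic factors (some care is needed since the product of two independent unbiased estimators is unbiased with controlled variance). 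This is exactly the place where we invoke the analysis of \cite{Chen,fallah2020convergence}, but now for a \emph{biased} gradient oracle — the novelty flagged in the technical-challenges section.

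The third step is the standard descent-lemma argument. Using $L_F$-smoothness,
\begin{align*}
\mathbb{E}[F(\theta^{t+1})] \le \mathbb{E}[F(\theta^t)] - \beta_{\rm up}\,\mathbb{E}[\langle\nabla F(\theta^t), g^t\rangle] + \frac{L_F\beta_{\rm up}^2}{2}\,\mathbb{E}[\|\widetilde g^t\|^2].
\end{align*}
Writing $g^t = \nabla F(\theta^t) + b^t$ and using $\langle\nabla F,b^t\rangle \ge -\frac12\|\nabla F\|^2 - \frac12\|b^t\|^2$ together with $\mathbb{E}\|\widetilde g^t\|^2 \le 2\|g^t\|^2 + 2\sigma^2 \le 4\|\nabla F(\theta^t)\|^2 + 4\|b^t\|^2 + 2\sigma^2$, we obtain, for $\beta_{\rm up}$ small enough (order $1/\sqrt T$) to absorb the $L_F\beta_{\rm up}^2$ term into the negative $\|\nabla F\|^2$ term,
\begin{align*}
\frac{\beta_{\rm up}}{4}\,\mathbb{E}[\|\nabla F(\theta^t)\|^2] \le \mathbb{E}[F(\theta^t)] - \mathbb{E}[F(\theta^{t+1})] + \mathcal{O}\!\big(\beta_{\rm up}\|b^t\|^2 + \beta_{\rm up}^2\sigma^2\big).
\end{align*}
Summing over $t=1,\dots,T$, telescoping, dividing by $T\beta_{\rm up}/4$, and using $F(\theta^1)-\inf F = \mathcal{O}(1)$ gives
\begin{align*}
\frac{1}{T}\sum_{t=1}^{T}\mathbb{E}[\|\nabla F(\theta^t)\|^2] = \mathcal{O}\!\left(\frac{1}{T\beta_{\rm up}} + \|b\|^2_{\max} + \beta_{\rm up}\sigma^2\right) = \mathcal{O}\!\left(\frac{1}{\sqrt T}\right),
\end{align*}
where the last equality substitutes $\beta_{\rm up}=\Theta(1/\sqrt T)$ and $\|b\|_{\max} = \mathcal{O}(1/\sqrt T)$ from Step~1. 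The three variants are handled simultaneously since they only differ in whether $\alpha_{\rm up}$ or $\alpha_{\rm low}$ is set to zero, which only makes the bias bound in Step~1 smaller.

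I expect the main obstacle to be Step~1 combined with the product-structure variance control in Step~2: carefully tracking how the two perturbations propagate through the Hessian-vector-product form of the hypergradient $\nabla_{\theta^t}\mathcal{L}(\tilde\theta_m(\theta^t);\mathcal{D}_m')$, and in particular showing that the bias is genuinely $\mathcal{O}(\alpha_{\rm up}+\beta_{\rm low}\alpha_{\rm low})$ rather than $\mathcal{O}(\alpha_{\rm low})$ — the latter would not vanish under $\alpha_{\rm low}=\mathcal{O}(1)$ and would break the rate. The key insight making this work is that the lower-level perturbation $\epsilon_m$ always appears pre-multiplied by $\beta_{\rm low}$ (directly in the BGD step \eqref{perturb}) or inside a Hessian that is itself $\ell_2$-Lipschitz and multiplied by $\beta_{\rm low}$, so its effect on $\theta_m'(\theta)$ — and hence on $F$ — is damped by the small lower-level stepsize; this is precisely why $\alpha_{\rm low}$ is allowed to remain $\Theta(1)$ while $\alpha_{\rm up}$ must shrink.
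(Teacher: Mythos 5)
Your proposal is correct and follows essentially the same route as the paper: the paper also casts all three variants as a generic \emph{biased} MAML update, bounds the bias of the lower-level gradient/Hessian estimators by $\gamma_g \le \ell_1(\alpha_{\rm up}+\alpha_{\rm low})+\alpha_{\rm up}/\beta_{\rm low}$ and $\gamma_h \le \ell_2(\alpha_{\rm up}+\alpha_{\rm low})$, shows (its Lemma on $\bar h^t$) that these enter the meta-gradient bias only multiplied by $\beta_{\rm low}$ — exactly your $\mathcal{O}(\alpha_{\rm up}+\beta_{\rm low}\alpha_{\rm low})$ damping argument explaining why $\alpha_{\rm low}=\Theta(1)$ is admissible — and then runs the same descent-lemma/telescoping step using the smoothness and variance machinery of \cite{Chen}. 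Your identification of the key insight (the lower-level perturbation always appearing pre-multiplied by $\beta_{\rm low}$) matches the crux of the paper's argument.
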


Theorem 1 implies that by choosing a proper perturbation threshold, all three versions of Sharp-MAML can still find $\epsilon$ stationary points for MAML objective \eqref{eq2} with ${\cal O}(\epsilon^{-2})$ iterations and ${\cal O}(\epsilon^{-2})$ samples, which matches or even improves the state-of-the-art sample complexity of MAML \cite{Rajeswaran,fallah2020convergence,ji2022theoretical}. 

\subsection{Generalization analysis}

To analyze the generalization error of Sharp-MAML, we make similar assumptions to Theorem 2 in~\cite{Foret}. 
Recall the \emph{population loss} $\mathcal{L} (\theta;{\mathcal{P}})=\mathbb{E}_{(x, y)\sim \mathcal{P}}[l(\theta, x, y)] $.
Denote the stationary point obtained by Sharp-MAML$_{\rm up}$ algorithm as $\hat{\theta}$. 
Note that the Sharp-MAML adopts  gradient descent (GD) as the lower level algorithm, which is uniformly stable based on Definition 1.

\begin{defi}[\cite{hardt2016train}]
An  algorithm $A$ is $\gamma$-uniformly stable if for all data sets $S, S^{\prime} \in Z^{n}$ such that $S$ and $S^{\prime}$ differ in at most one example, we have
\begin{equation}
  \sup_S \left|\mathbb{E}_{S}\left[l\left(A(S) ; x,y\right)-l\left(A\left(S^{\prime}\right) ; x,y\right)\right]\right| \leq \gamma
\end{equation}
where $A(S)$ and $A(S^{\prime})$ are the outputs of the algorithm $A$ given datasets $S$ and $S^{\prime}$.
\end{defi}

With the above definition of uniform stability, we are ready to establish the generalization performance. We defer the proof of Theorem \ref{thm:pac_bayes} to Appendix D.

\begin{theorem}
\label{thm:pac_bayes}
    Assume loss function $\mathcal{L}(\cdot)$ is bounded: $0 \leq \mathcal{L}(\theta_m'; \mathcal{D}) \leq 1$, for  $\theta_m'$ defined in~\eqref{eq2}, and any $\mathcal{D}$. 
    Define 
    $F(\theta ; \mathcal{P})=\mathbb{E}_{\mathcal{D} \sim \mathcal{P}}\left[F(\theta ; \mathcal{D})\right]$.
    Assume 
    $F(\hat{\theta} ; \mathcal{P}) \leq \mathbb{E}_{\epsilon \sim \mathcal{N}\left(0, \alpha^{2} \mathbf{I}\right)}\left[F(\hat{\theta}+\epsilon ; \mathcal{P})\right]$
    at the stationary point of the Sharp-MAML$_{\rm up}$ denoted by  $\hat{\theta}$.
    For parameter $\theta_m'(\hat{\theta}; \mathcal{D})$ learned with $\gamma_{\mathrm{A}}$ uniformly stable algorithm $A$ 
    from $\hat{\theta} \in \mathbb{R}^k$, 
with probability $1-\delta$ over the choice of the training set $\mathcal{D} \sim \mathcal{P}$, with $|\mathcal{D}| = nM$, it holds that
\begin{align}\label{eq:thm_pac_bayes_ml_main}
&F (\hat{\theta};{\mathcal{P}}) \leq 
\max _{\|\epsilon\|_{2} \leq \alpha} F(\hat{\theta}+{\epsilon}; \mathcal{D})
+ \gamma_{\rm A}+\\
&\sqrt{\frac{ k \ln \Big(1+\frac{\|\hat{\theta}\|_{2}^{2}}{ \alpha^{2}} \Big(1+\sqrt{\frac{\ln (nM)}{k}}\Big)^{2} \Big) + 2\ln \frac{1}{\delta} + 5 \ln (nM)}{4nM}}.   \nonumber
\end{align}
\end{theorem}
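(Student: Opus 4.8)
The plan is to adapt the PAC-Bayesian argument behind Theorem~2 of~\cite{Foret} to the bilevel setting of Sharp-MAML$_{\rm up}$, while importing the base-learner uniform-stability device of~\cite{farid2021generalization} to absorb the contribution of the lower-level adaptation. The entry point is the assumed ``flatness'' inequality $F(\hat\theta;\mathcal{P}) \le \mathbb{E}_{\epsilon\sim\mathcal{N}(0,\alpha^2\mathbf{I})}\big[F(\hat\theta+\epsilon;\mathcal{P})\big]$, which holds at a Sharp-MAML$_{\rm up}$ stationary point $\hat\theta$ by hypothesis; since the posterior $\mathcal{Q}=\mathcal{N}(\hat\theta,\alpha^2\mathbf{I})$ is exactly this Gaussian shift, it reduces the task to upper bounding the noise-averaged population meta-loss $\mathbb{E}_{\theta\sim\mathcal{Q}}\big[F(\theta;\mathcal{P})\big]$.

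First I would instantiate a McAllester-type PAC-Bayes inequality with a data-independent Gaussian prior $\mathcal{P}_{\rm prior}=\mathcal{N}(0,\sigma^2\mathbf{I})$ and the posterior $\mathcal{Q}=\mathcal{N}(\hat\theta,\alpha^2\mathbf{I})$ on the meta-parameter. The subtlety relative to the single-level case is that the object being averaged, $F(\theta;\cdot)=\frac{1}{M}\sum_{m}\mathcal{L}(\theta_m'(\theta);\cdot)$, is itself the output of the inner learner $A$; following~\cite{farid2021generalization}, the meta generalization gap splits into a \emph{meta-level} term, controlled by the PAC-Bayes quantity on $\hat\theta$, and a \emph{base-level} term, controlled by the stability of $A$. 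Since $A$ is $\gamma_{\rm A}$-uniformly stable and the per-task loss lies in $[0,1]$, the classical stability-implies-generalization estimate bounds the base-level term by $\gamma_{\rm A}$, producing, with probability $\ge 1-\delta$,
\begin{equation*}
\mathbb{E}_{\theta\sim\mathcal{Q}}\big[F(\theta;\mathcal{P})\big] \,\le\, \mathbb{E}_{\theta\sim\mathcal{Q}}\big[F(\theta;\mathcal{D})\big] + \gamma_{\rm A} + \sqrt{\frac{\mathrm{KL}(\mathcal{Q}\,\|\,\mathcal{P}_{\rm prior}) + \ln\frac{nM}{\delta} + c_0}{\,\Theta(nM)\,}},
\end{equation*}
for an absolute constant $c_0$. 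I expect the clean fusion of the PAC-Bayes machinery on $\hat\theta$ with the inner-loop stability bound to be the main conceptual obstacle, since the two layers must be decoupled so that the stability term appears additively and does not interact with the Gaussian perturbation.

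Next I would control the right-hand side. The expected posterior empirical loss is upper bounded by the worst-case perturbed empirical loss $\max_{\|\epsilon\|_2\le\alpha} F(\hat\theta+\epsilon;\mathcal{D})$: splitting the Gaussian expectation over $\epsilon$ according to whether $\|\epsilon\|_2\le\alpha$ and using $0\le F\le 1$ on the complement, the tail mass is controlled by $\chi^2$-concentration and contributes only a lower-order ${\cal O}\big(\sqrt{\ln(nM)/(nM)}\big)$ term that is absorbed into the square root. This is the step where the posterior scale and the SAM radius $\alpha$ must be matched, up to the $\sqrt{\ln(nM)/k}$ slack that eventually surfaces inside the logarithm of the final bound.

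Finally I would evaluate $\mathrm{KL}\big(\mathcal{N}(\hat\theta,\alpha^2\mathbf{I})\,\|\,\mathcal{N}(0,\sigma^2\mathbf{I})\big)=\tfrac12\big(k\ln\tfrac{\sigma^2}{\alpha^2}-k+\tfrac{k\alpha^2+\|\hat\theta\|_2^2}{\sigma^2}\big)$ and optimize over $\sigma$. Since the minimizer depends on the data-dependent quantity $\|\hat\theta\|_2$, the prior cannot be a single Gaussian; instead I would take a fixed geometric grid of variances $\{\sigma_j^2\}$, weight the grid by a summable sequence $\propto 1/j^2$, apply a union bound, and select the grid point nearest the ideal scale. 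This standard device --- mirroring the appendix of~\cite{Foret} --- turns $k\ln(\sigma^2/\alpha^2)$ into $k\ln\!\big(1+\tfrac{\|\hat\theta\|_2^2}{\alpha^2}(1+\sqrt{\ln(nM)/k})^2\big)$ and injects the extra $5\ln(nM)$ and the $2\ln\frac{1}{\delta}$ into the numerator, so that chaining through the flatness assumption, the PAC-Bayes inequality, and the concentration step yields~\eqref{eq:thm_pac_bayes_ml_main} after collecting constants. The residual work is bookkeeping: propagating every logarithmic and constant factor through the union bound and the tail estimate so the expression matches the stated form exactly.
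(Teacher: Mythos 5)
Your proposal follows essentially the same route as the paper's own proof: it invokes the stability-augmented PAC-Bayes bound of \cite{farid2021generalization} with Gaussian prior and posterior, bounds the posterior-averaged empirical loss by the worst-case perturbation in the $\alpha$-ball via a $\chi^2$ tail estimate (which is exactly where the $(1+\sqrt{\ln(nM)/k})^2$ factor enters the logarithm), optimizes the prior variance over a geometric grid with a $1/j^2$-weighted union bound, and closes the argument with the assumed flatness inequality at $\hat\theta$. The only piece you leave implicit is the complementary case $\|\hat\theta\|_2^2 > \alpha^2(\exp(4nM/k)-1)$, needed to guarantee the grid index is a valid natural number, in which the bound holds trivially because the square-root term already exceeds $1$; this is minor bookkeeping and the proposal is otherwise a faithful match.
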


\textbf{Improved upper bound of generalization error.}
Theorem~\ref{thm:pac_bayes} shows that the difference between the population loss and the empirical loss of Sharp-MAML$_{\rm up}$ is bounded by the stability of the lower-level update $\gamma_A$ and another $\tilde{\mathcal{O}}(k/nM)$ term that vanishes as the number of meta-training data goes to infinity.
The lower-level update GD has uniform stability of order $\mathcal{O}(1/n)$~\cite{hardt2016train}.
Also, it is worth noting that the upper bound of the population loss on the right-hand side (RHS) of \eqref{eq:thm_pac_bayes_ml_main}, is a function of $\alpha$. And for any sufficiently small $\alpha_0>0$, we can find some $\alpha_1 > \alpha_0$, where this function takes smaller value than at $\alpha_0$ (see the proof in Appendix D). This suggests that a choice of $\alpha$ arbitrarily close to zero, in which case Sharp-MAML  reduces to the original MAML method, is not optimal {in terms of the generalization error upper bound}.
Therefore, it shows Sharp-MAML has smaller generalization error upper bound than conventional MAML.
The analysis can be extended to Sharp-MAML$_{\rm both}$ in a similar way.
\vspace{-0.1cm}

\section{Numerical Results}
In this section, we demonstrate the effectiveness of Sharp-MAML by comparing it with several popular MAML baselines in terms of generalization and computation cost. We evaluate Sharp-MAML on 5-way 1-shot and 5-way 5-shot settings on the Mini-Imagenet dataset and present the results on Omniglot dataset in Appendix \ref{appendixE}. 

\subsection{Experiment setups}  \label{sec5.1}
Our model follows the same architecture used by \cite{Vinyals}, comprising of 4 modules with a $3\times3$ convolutions with 64 filters followed by batch normalization \cite{Ioffe}, a ReLU non-linearity, and a $2\times2$ max-pooling. We follow the experimental protocol in \cite{Finn}. The models were trained using the SAM\footnote{We used the open-source SAM PyTorch implementation available at \url{https://github.com/davda54/sam}} algorithm with Adam as the base optimizer and learning rate $\alpha = 0.001$. Following \cite{Ravi}, 15 examples per class were used to evaluate the post-update meta-gradient. The values of $\alpha_{\rm low}$, $\alpha_{\rm up}$ are taken from a set of $\{0.05, 0.005, 0.0005, 0.00005\}$ and each experiment is run on each value for three random seeds. We choose the inner gradient steps from a set of $\{3,5,7,10\}$. The step size is chosen from a set of $\{0.1, 0.01, 0.001\}$. For Sharp-MAML$_{\rm both}$ we use the same value of $\alpha_{\rm low}$, $\alpha_{\rm up}$ in each experiment. We report the best results in Tables \ref{table1} and \ref{table2}. 

One Sharp-MAML update executes two backpropagation operations (i.e., one to compute $\epsilon(\theta)$ and another to compute the final gradient). Therefore, for a fair comparison, we execute each MAML training run twice as many epochs as each Sharp-MAML training run and report the best score achieved by each MAML training run across either the standard epoch count or the doubled epoch count.

\subsection{Sharp-MAML versus MAML baselines} \label{sec5.2}
As baselines, we use MAML \citep{Finn}, Matching Nets \citep{Vinyals}, CAVIA \citep{zintgraf2019_cavia}, Reptile \citep{nichol2018_reptile}, FOMAML \citep{nichol2018_reptile}, and BMAML \citep{yoon2018_BMAML}. 

\begin{table}[t]
\vskip -0.1in
\caption{Results on Mini-Imagenet (5-way 1-shot). Our reproduced result of MAML is close to that of the original$^*$.
}
\label{table1}
\begin{center}
\begin{small}
\begin{sc}
\begin{tabular}{lcccr}
\toprule
Algorithms  & Accuracy\\
\midrule
Matching Nets                         & 43.56$\%$\\
iMAML~\cite{rajeswaran2019_imaml}
  & 49.30 $\%$\\
CAVIA~\cite{zintgraf2019_cavia}
  & 47.24 $\%$ \\
Reptile~\cite{nichol2018_reptile} 
  & 49.97 $\%$ \\
FOMAML~\cite{nichol2018_reptile} 
  & 48.07 $\%$ \\
LLAMA~\cite{grant2018recasting}
  & 49.40  $\%$  \\
BMAML~\cite{yoon2018_BMAML}
  & 49.17  $\%$  \\
MAML (reproduced)                           & 47.13$\%$  \\
Sharp-MAML$_{\rm low}$                        & 49.72$\%$  \\
Sharp-MAML$_{\rm up}$                        & 49.56$\%$ \\
Sharp-MAML$_{\rm both}$                         &  {50.28$\%$  }\\

\bottomrule
\end{tabular}
\end{sc}
\end{small}
\end{center}
\vskip -0.05in
\footnotesize{$^*$ reproduced using the Torchmeta \cite{deleu2019torchmeta} library}
\end{table}

\begin{table}[t]
\caption{Results on Mini-Imagenet (5-way 5-shot). Our reproduced result of MAML is close to that of the original$^*$.
}
\label{table2}
\begin{center}
\begin{small}
\begin{sc}
\begin{tabular}{lcccr}
\toprule
Algorithms  & Accuracy\\
\midrule
Matching Nets                         & 55.31$\%$\\
CAVIA~\cite{zintgraf2019_cavia}
  & 59.05$\%$  \\
Reptile~\cite{nichol2018_reptile} &65.99$\%$ \\
FOMAML~\cite{nichol2018_reptile} 
  &63.15 $\%$ \\
BMAML~\cite{yoon2018_BMAML}
  &64.23 $\%$ \\
MAML (reproduced)                           & 62.20$\%$  \\
Sharp-MAML$_{\rm low}$                        & 63.18$\%$  \\
Sharp-MAML$_{\rm up}$                        & 63.06$\%$ \\
Sharp-MAML$_{\rm both}$                         &  {65.04$\%$  }\\

\bottomrule
\end{tabular}
\end{sc}
\end{small}
\end{center}
\vskip -0.05in
\footnotesize{$^*$ reproduced using the Torchmeta \cite{deleu2019torchmeta} library}
\vskip -0.2in
\end{table}

In Tables \ref{table1} and \ref{table2}, we report the accuracy of three variants of Sharp-MAML and other baselines on the Mini-Imagenet dataset in the 5-way 1-shot and 5-way 5-shot settings respectively. We observe that Sharp-MAML$_{\rm low}$ outperforms MAML in all cases, exhibiting the advantage of our methods.

The results on the Omniglot dataset are reported in Table \ref{table3} and Table \ref{table4} of Appendix. Our results verify the efficacy of all the three variants of Sharp-MAML, suggesting that SAM indeed improves the generalization performance of bi-level models like MAML by seeking out flatter minima. 

Since Sharp-MAML requires one more gradient computation per iteration than the original MAML, for a fair comparison, we report the execution times in Table \ref{table5}. The results show that Sharp-MAML$_{\rm low}$ requires the least amount of additional computation while still achieving significant performance gains. Sharp-MAML$_{\rm up}$ and Sharp-MAML$_{\rm both}$  also improves the performance significantly but both approaches have a higher computation than Sharp-MAML$_{\rm low}$ since the computation of additional Hessians is needed for the meta-update gradient. 
\vspace{-0.1cm}
 \begin{table}[t]
\caption{Results on Mini-Imagenet (5-way 1-shot). }
\label{table5}
\vskip 0.05in
\begin{center}
\begin{small}
\begin{sc}
\begin{tabular}{lcccr}
\toprule
 Algorithms   & Accuracy & Time$^\dagger$\\
\midrule
MAML (reproduced)                                & 47.13$\%$ & x1 \\
Sharp-MAML$_{\rm low}$                          & 49.72$\%$ & x2.60\\
Sharp-MAML$_{\rm up}$                          & 49.56$\%$ & x3.60\\
Sharp-MAML$_{\rm both}$                          & 50.28$\%$ & x4.60\\
Sharp-MAML$_{\rm low}$ -ANIL                   & 49.19$\%$ & x1.40\\
ESharp-MAML$_{\rm low}$                         & 48.90$\%$ & x2.20\\
ESharp-MAML$_{\rm low}$ -ANIL                   & 49.03$\%$ & x1.20\\
\bottomrule
\end{tabular}
\end{sc}
\end{small}
\end{center}
\vskip -0.05in
\footnotesize{$^\dagger$ execution time is normalized to MAML training time}
\vskip -0.1in
\end{table}
\subsection{Ablation study and loss landscape visualization} 
We conduct an ablation study on the effect of perturbation radii $\alpha_{\rm low}$ and $\alpha_{\rm up}$ on the three Sharp-MAML variants. Figure \ref{figure2} and Figure \ref{figure3} summarize the results on the Mini-Imagenet dataset. We observe that all the three Sharp-MAML variants outperform the original MAML for almost all the values of $\alpha_{\rm low}$, $\alpha_{\rm up}$ we used in our experiments. Therefore, integrating SAM at any/both stage(s) gives better performance than the original SAM for a wide range of values of the perturbation sizes, reducing the need to fine-tune these hyperparameters. This also suggests that SAM is effectively avoiding bad local minimum in MAML loss landscape (cf. Figure \ref{figure1}) for a wide range of perturbation radii. In Figure \ref{figure4}, we plot the loss landscapes of MAML and Sharp-MAML, and observe that Sharp-MAML indeed seeks out landscapes that are smoother as compared to the landscape of original MAML and hence, meets our theoretical characterization of improved generalization performance. Furthermore, the generalization error of Sharp-MAML$_{\rm both}$ is found to be 34.58$\%$/8.56$\%$ as compared to 37.46$\%$/11.58$\%$ of MAML for the 5-way 1-shot/5-shot Mini-Imagenet, which explains the advantage of our approach.
 \begin{figure}[t]
    \centering
    \includegraphics[width=.47\textwidth]{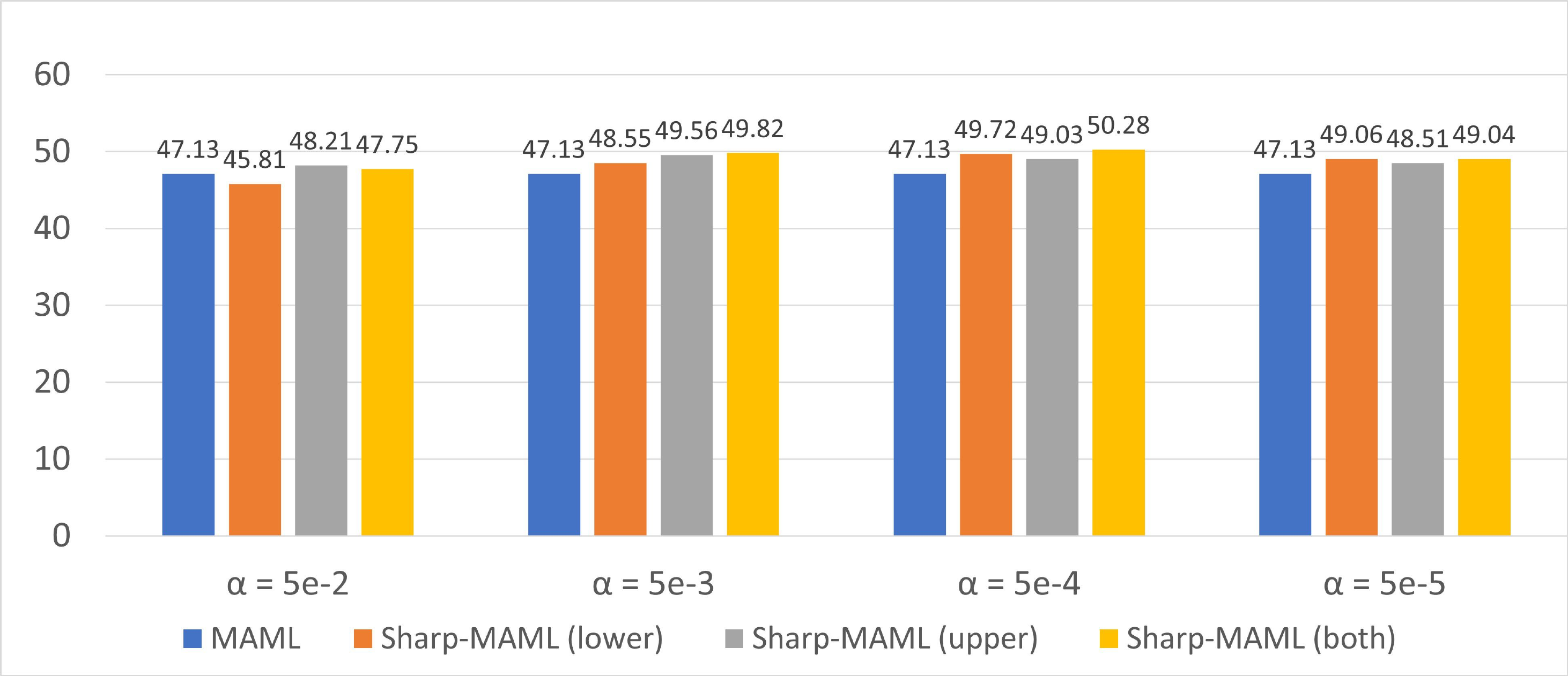}
    \vspace{-0.1cm}
    \caption{ Performance under different values of $\alpha_{\rm low}$, $\alpha_{\rm up}$ on Mini-Imagenet (5-way 1-shot). For Sharp-MAML$_{\rm both}$, we used the same value of $\alpha_{\rm low}$ and $\alpha_{\rm up}$ (i.e., $\alpha_{\rm low}$ = $\alpha_{\rm up}$).}
    \label{figure2}
\vskip -0.05in
\end{figure}

\begin{figure}[t]
    \centering
    \vspace{0.1cm}
    \includegraphics[width=.47\textwidth]{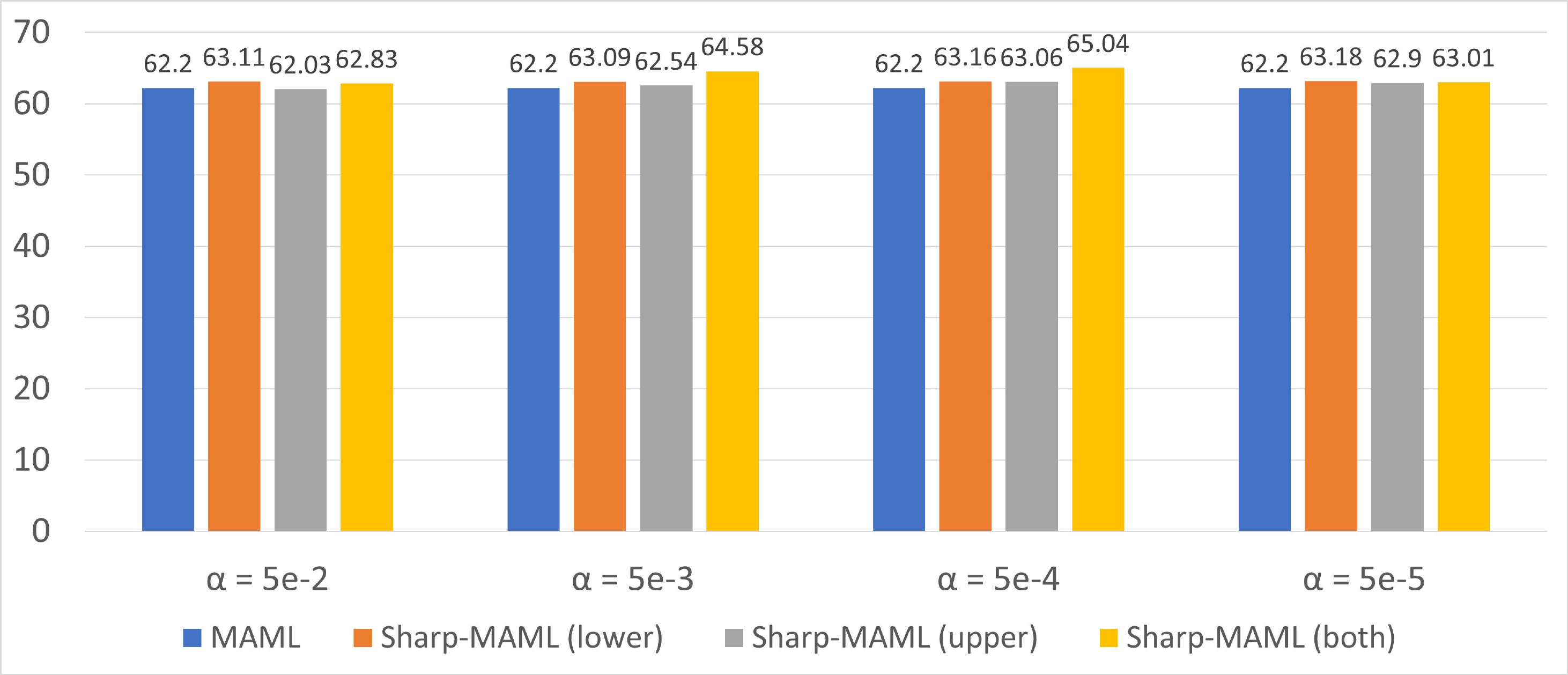}
    \vspace{-0.1cm}
     \caption{ Performance under different values of $\alpha_{\rm low}$, $\alpha_{\rm up}$ on Mini-Imagenet (5-way 5-shot). For Sharp-MAML$_{\rm both}$, we used the same value of $\alpha_{\rm low}$ and $\alpha_{\rm up}$ (i.e., $\alpha_{\rm low}$ = $\alpha_{\rm up}$)}
    \label{figure3}
    \vspace{-0.4cm}
\end{figure}
\begin{figure*}[t]
    \centering
    \includegraphics[width=.9\textwidth]{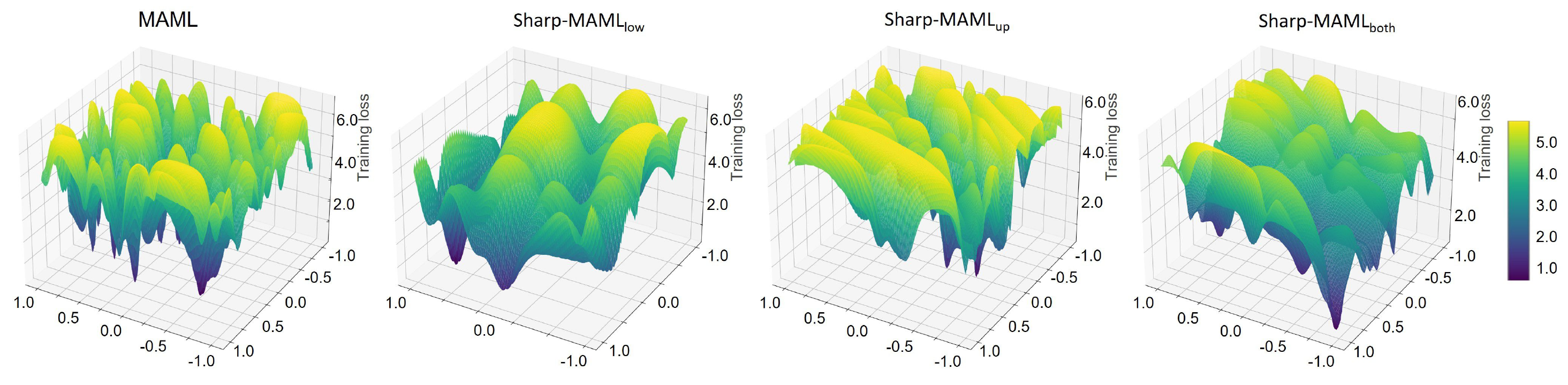}
    \vspace{-0.1cm}
    \caption{One-task cross entropy loss landscapes of MAML and different variants of Sharp-MAML trained on CIFAR-100 dataset (5-way 1-shot setting) using class one. The plots are generated following \citep{Li9}. Details of the architecture are given in Section \ref{sec5.1}.}
    \label{figure4}
    \vspace{-0.1cm}
\end{figure*}

\subsection{Computationally-efficient version of Sharp-MAML} \label{sec5.3}

Next we investigate if the computational overhead of Sharp-MAML$_{\rm low}$ can be further reduced by leveraging the computationally-efficient MAML -- an almost-no-inner-loop (ANIL) method \citep{Raghu} and the computationally-efficient SAM -- ESAM \citep{Du2}. Sharp-MAML-ANIL is the case when we use ANIL with Sharp-MAML$_{\rm low}$; ESharp-MAML is the case when we use ESAM with Sharp-MAML$_{\rm low}$; ESharp-MAML-ANIL is Sharp-MAML$_{\rm low}$ with both ANIL and ESAM.

In ANIL, fine-tuning is only applied to the task-specific head with a frozen representation network  from the meta-model. Motivated by \citep{Raghu}, we ask if incorporating Sharp-MAML$_{\rm low}$  in ANIL can ameliorate the computational overhead while preserving the performance gains obtained using the model trained on Sharp-MAML$_{\rm low}$. ANIL decomposes the meta-model $\theta$ into two parts: the representation encoding network denoted by $\theta_r$ and the classification head denoted by $\theta_c$ i.e., $\theta \triangleq [\theta_r, \theta_c]$. Different from \eqref{MAMLb}, ANIL then only fine-tunes  $\theta_c$ over a specific task $m$, given by:
\begin{equation}
     \,\,\, \theta_{m}^{\prime}(\theta) = \arg \min_{\theta_{m};\, \theta_{r, m} = \theta_{r}} \mathcal{L} (\theta_{c, m}, \theta_{r,m}; \mathcal{D}_m).
\end{equation}
In other words, the initialized representation $\theta_r$, which comprises most of the network, is unchanged during fine-tuning.

ESAM leverages two training strategies, Stochastic Weight Perturbation (SWP) and Sharpness-Sensitive Data Selection (SDS). SWP saves computation by stochastically selecting set of weights in each iteration, and SDS judiciously selects a subset of data that is sensitive to sharpness. 
To be specific, SWP uses a gradient mask ${\bf v} = (v_{1},...,v_{M})$ where $v_i \myeq$ Bern$(\xi)$, $\forall i$.
In SDS, instead of computing ${\cal L}_{{\cal N}}(\theta + {\epsilon(\theta}))$ over all samples, ${\cal N}$, a $\emph{subset}$ of samples, ${\cal N}^{+}$, is selected, whose loss values increase the most with ${\epsilon}(\theta)$; that is,
\begin{align*}
&{\cal N}^{+} \triangleq \{(x_i, y_i): l(\theta+{\epsilon};x_i, y_i) - l(\theta;x_i, y_i) > \tau\}\\
&{\cal N}^{-} \triangleq \{(x_i, y_i): l(\theta+{\epsilon};x_i, y_i) - l(\theta;x_i, y_i) < \tau\}
\end{align*}
where the threshold $\tau$ controls the size of ${\cal N}^{+}$. Furthermore, $\mu=|{\cal N}^{+}|/|{\cal N}|$ is ratio of number of selected samples with respect to the batch size and determines the exact value of $\tau$. 
In practice, $\mu$ is selected to maximize efficiency while preserving generalization performance.

In Table \ref{table5}, we report our results on three computationally efficient versions of Sharp-MAML. 
We find that Sharp-MAML-ANIL is comparable in performance to Sharp-MAML while requiring almost $86 \%$ less computation. ESharp-MAML also reduces the computation, but has slight performance loss. We suspect that this is due to the nested structure of the meta-learning problem that adversely affects the two training strategies used in ESAM. We further investigate the effect of both ANIL and ESAM on Sharp-MAML and observe significant reduction in computation ($116 \%$ faster) with slight degradation in performance as compared to Sharp-MAML. When compared to MAML, ESharp-MAML-ANIL performs considerably better (+1.90$\%$ gain in accuracy) while requiring only $20 \%$ more computation.

\section{Conclusions}
In this paper, we study sharpness-aware minimization (SAM) in the context of model-agnostic meta-learning (MAML) through the lens of bilevel optimization. We name our new MAML method Sharp-MAML.  Through a systematic empirical and theoretical study, we find that adding SAM into any/both fine-tuning or/and meta-update stages improves the generalization performance. We further find that incorporating SAM in the fine-tuning stage alone is the best trade-off between performance and computation. To further save computation overhead, we leverage the techniques such as efficient SAM and almost no inner loop to speed up Sharp-MAML, without sacrificing generalization.

\section*{Acknowledgements}
 The work was partially supported by NSF MoDL-SCALE Grant 2134168 and the Rensselaer-IBM AI Research Collaboration (\url{http://airc.rpi.edu}), part of the IBM AI Horizons Network (\url{http://ibm.biz/AIHorizons}).



\nocite{langley00}

\bibliography{example_paper,pac_bayes}
\bibliographystyle{icml2022}




\clearpage
\appendix
\onecolumn
\icmltitle{Supplementary Material for\\
``\FullTitle"}

\section{Omitted Pseudo-code in The Main Manuscript} \label{appendix::A}
In this section, we present the omitted pseudo-code of MAML and two Sharp-MAML algorithms. 

\subsection{MAML algorithm}
The pseudo-code of plain-vanilla MAML is summarized in Algorithm \ref{alg1}.

\begin{algorithm}[htb]
    \normalsize
	\caption{MAML for few-shot supervised learning}  
	\label{alg1}
	\begin{algorithmic}[1]
	\State {\bf Require:} $p( \mathcal{T})$: distribution over tasks 
	\State {\bf Require:} $\beta_{\rm low}, \beta_{\rm up}$: step sizes
    	\For{$t=1,\cdots, T$}
            \State Sample batch of tasks $\mathcal{T}_{m} \sim p(\mathcal{T})$
            \For{{\bf all} $\mathcal{T}_{m}$}
            \State Sample $K$ examples from $\mathcal{D}_{m} = \{{x}_i, y_{i}\}$
            \State Evaluate $\widetilde\nabla \mathcal{L} ({\theta^t}; \mathcal{D}_{m} )$ 
            \State Compute fine-tuned parameter $\theta_{m}^{\prime}(\theta^t)$ via \eqref{eq2}
            \State Sample datapoints from $\mathcal{D}_m^{\prime} =  \{{x}_{i}, y_{i}\} $  for meta-update
            \EndFor
        \State Update the meta-model $\theta$ by $\theta^{t+1}=\theta^t - \beta_{\rm up} \widetilde\nabla_{\theta^t} \sum_{m=1}^{M} \mathcal{L}(\theta^\prime_m(\theta^t);\mathcal{D}_m^{\prime})$
        \EndFor
        
    \end{algorithmic} 
\end{algorithm}

\begin{algorithm}[htb]
    \normalsize
	\caption{Sharp-MAML$_{\rm up}$}
	\label{alg3}
	\begin{algorithmic}[1]
	\State {\bf Require:} $p( \mathcal{T})$: distribution over tasks 
	\State {\bf Require:} $\beta_{\rm low}, \beta_{\rm up}$:   step sizes
	\State {\bf Require:} $\alpha_{\rm low}>0, \alpha_{\rm up} >0$: perturbation radii
    	\For{$t=1,\cdots, T$}
            \State Sample batch of tasks $\mathcal{T}_{m} \sim p(\mathcal{T})$
            \For{{\bf all} $\mathcal{T}_{m}$}
            \State Sample $K$ examples from  $\mathcal{D}_{m}$ 
            \State Evaluate $\widetilde\nabla \mathcal{L}({\theta^t}; \mathcal{D}_{m})$ 
            \State Compute fine-tuned parameter $\tilde\theta^1_m(\theta^t)=\theta^t-\beta_{\rm low}\widetilde\nabla\mathcal{L}(\theta^t;\mathcal{D}_m)$
            \State Sample data from $\mathcal{D}_m^{\prime}$ for meta-update
            \EndFor
        \State \textcolor{blue} {Compute $ \widetilde\nabla_{\theta^t} \sum_{m=1}^M \mathcal{L}({\tilde\theta^1_m(\theta^t)};\mathcal{D}_m^{\prime})$} 
        \State \textcolor{blue} {Compute perturbation ${\epsilon}(\theta^t)$ via} \eqref{eq_upe}
        \State \textcolor{blue} {Update $\theta^{t+1}$ via \eqref{eq_upmeta}}
        \EndFor
    \end{algorithmic} 
\end{algorithm}

\begin{algorithm}[htb]
    \normalsize
	\caption{Sharp-MAML$_{\rm low}$} 
	\label{alg2}
	\begin{algorithmic}[1]
	\State {\bf Require:} $p( \mathcal{T})$: distribution over tasks 
	\State {\bf Require:} $\beta_{\rm low}, \beta_{\rm up}$:   step sizes
	\State {\bf Require:} $\alpha_{\rm low}>0, \alpha_{\rm up} >0$: perturbation radii
    	\For{$t=1,\cdots, T$}
            \State Sample batch of tasks $\mathcal{T}_{m} \sim p(\mathcal{T})$
            \For{{\bf all} $\mathcal{T}_{m}$}
            \State Sample $K$ examples  $\mathcal{D}_{m}$ from  $\mathcal{T}_m$
            \State Evaluate $\widetilde\nabla \mathcal{L}({\theta^t}; \mathcal{D}_{m})$ 
            \State \textcolor{red}{Compute perturbation ${\epsilon_m}(\theta^t)$ via} \eqref{eq7}
            \State \textcolor{red} {Compute fine-tuned parameter $\tilde\theta_{m}^1(\theta^t)$ via} \eqref{eq_lowtheta}
            \State Sample data $\mathcal{D}_m^{\prime}$ for meta-update
            \EndFor
        \State \textcolor{blue}{Update the meta-model $\theta^{t+1}$ via \eqref{eq_lowmeta}}
        \EndFor
    \end{algorithmic} 
\end{algorithm}

\subsection{Sharp-MAML$_{\rm up}$ algorithm}
In this case, $\epsilon_m(\theta^t)=0,\forall m, t$, so we have that $\tilde\theta^1_m(\theta^t)=\theta^t-\beta_{\rm low}\widetilde\nabla\mathcal{L}(\theta^t;\mathcal{D}_m)$ and $\tilde\theta_m^2(\theta^t)=\theta^t+\epsilon(\theta^t)-\beta_{\rm low}\widetilde\nabla\mathcal{L}(\theta^t+\epsilon(\theta^t);\mathcal{D}_m)$. 
Defining $\nabla h = \widetilde\nabla_{\theta^t} \sum_{m=1}^{M} \mathcal{L}(\tilde\theta^1_m(\theta^t);\mathcal{D}_m^{\prime})$, the upper perturbation $\epsilon (\theta^t)$ can be computed by:
\begin{equation}  \label{eq_upe}
  \epsilon (\theta^t) =  \alpha_{\rm up}{\nabla h}/{||\nabla h||_2}.
\end{equation}
Let $\epsilon^t=\epsilon(\theta^t)$, then the final meta update can be written as
\begin{equation}  \label{eq_upmeta}
    \theta^{t+1}= \theta^t - \beta_{\rm up} \widetilde\nabla_{\theta^t} \sum_{m=1}^{M} \mathcal{L}(\theta^t+\epsilon^t-\beta_{\rm low}\widetilde\nabla\mathcal{L}(\theta^t+\epsilon^t;\mathcal{D}_m);\mathcal{D}_m^{\prime}).
\end{equation}
The pseudo-code is summarized in Algorithm \ref{alg3}.

\subsection{Sharp-MAML$_{\rm low}$ algorithm}
In this case, $\epsilon(\theta^t)=0,\forall t$, so we have that $\tilde\theta^1_m(\theta^t)=\tilde\theta^2_m(\theta^t)=\theta^t-\beta_{\rm low}\widetilde\nabla\mathcal{L}(\theta^t+\epsilon_m(\theta^t);\mathcal{D}_m)$. Then the final meta update can be written as
\begin{align}  
    &\tilde\theta^1_m(\theta^t)=\theta^t-\beta_{\rm low}\widetilde\nabla\mathcal{L}(\theta^t+\epsilon_m(\theta^t);\mathcal{D}_m)\label{eq_lowtheta}\\
    &\theta^{t+1}= \theta^t - \beta_{\rm up} \widetilde\nabla_{\theta^t} \sum_{m=1}^{M} \mathcal{L}(\tilde\theta^1_m(\theta^t);\mathcal{D}_m^{\prime})\label{eq_lowmeta}
\end{align}
The pseudo-code is summarized in Algorithm \ref{alg2}.

\section{Proof of Lemma \ref{maml-infer}}\label{pf:lm1}
\begin{proof}
Since $\mathcal{L}(\theta;\mathcal{D}_m)\in\mathcal{C}^3$, the stationary point of $\mathcal{L}(\theta;\mathcal{D}_m)$ satisfies 
\begin{equation}\label{erm-grad}
    \nabla \mathcal{L}(\theta;\mathcal{D}_m)=0
\end{equation}
and the local minimizer of $\mathcal{L}(\theta;\mathcal{D}_m)$ satisfies 
\begin{equation}\label{erm-hess}
   \nabla \mathcal{L}(\theta;\mathcal{D}_m)=0 ~~{\rm and} ~~ \nabla^2 \mathcal{L}(\theta;\mathcal{D}_m)\succeq 0. 
\end{equation}
Next we compute the gradient of $\mathcal{L}(\theta_m^\prime(\theta);\mathcal{D}_m)$ according to the chain rule, that is
\begin{align}
    \nabla \mathcal{L}(\theta_m^\prime(\theta);\mathcal{D}_m)&=(I-\beta_{\rm low}\nabla^2 \mathcal{L}(\theta;\mathcal{D}_m))\nabla \mathcal{L}(\theta-\beta_{\rm low}\nabla \mathcal{L}(\theta;\mathcal{D}_m);\mathcal{D}_m)\label{mamlgrad}
\end{align}
and the Hessian of $\mathcal{L}(\theta_m^\prime(\theta);\mathcal{D}_m)$, that is
\begin{align}\label{mamlhess}
    \nabla^2 \mathcal{L}(\theta_m^\prime(\theta);\mathcal{D}_m)&=\nabla\left((I-\beta_{\rm low}\nabla^2 \mathcal{L}(\theta;\mathcal{D}_m))\nabla \mathcal{L}(\theta-\beta_{\rm low}\nabla \mathcal{L}(\theta;\mathcal{D}_m);\mathcal{D}_m)\right)\nonumber\\
    &=-\beta_{\rm low}\nabla^3\mathcal{L}(\theta;\mathcal{D}_m)\nabla \mathcal{L}(\theta-\beta_{\rm low}\nabla \mathcal{L}(\theta;\mathcal{D}_m);\mathcal{D}_m)\nonumber\\
    &~~~~+(I-\beta_{\rm low}\nabla^2 \mathcal{L}(\theta;\mathcal{D}_m))^2 \nabla^2 \mathcal{L}(\theta-\beta_{\rm low}\nabla \mathcal{L}(\theta;\mathcal{D}_m);\mathcal{D}_m).
\end{align}

Plugging \eqref{erm-grad} to \eqref{mamlgrad}, we get that 
$
    \nabla_\theta \mathcal{L}(\theta_m^\prime(\theta);\mathcal{D}_m)=0
$
which implies $\theta$ is also a stationary point for $\mathcal{L}(\theta_m^\prime(\theta);\mathcal{D}_m)$. 

Moreover, plugging \eqref{erm-hess} to \eqref{mamlhess}, we get that
$
\nabla_\theta \mathcal{L}(\theta_m^\prime(\theta);\mathcal{D}_m)=0
$
 and
\begin{align*}
 \nabla_\theta^2 \mathcal{L}(\theta_m^\prime(\theta);\mathcal{D}_m)=(I-\beta_{\rm low}\nabla^2 \mathcal{L}(\theta;\mathcal{D}_m))^2 \nabla^2 \mathcal{L}(\theta;\mathcal{D}_m)\succeq 0
\end{align*}
which implies $\theta$ is also a local minimizer of $\mathcal{L}(\theta_m^\prime(\theta);\mathcal{D}_m)$. 

If $\theta$ is the stationary point for $\mathcal{L}(\theta;\mathcal{D}_m),\forall m\in\mathcal{M}$, we know that $\theta$ is also the stationary point for $\mathcal{L}(\theta_m^\prime(\theta);\mathcal{D}_m),\forall m\in\mathcal{M}$. Thus, $\theta$ is also the stationary point for $\sum_{m=1}^M\mathcal{L}(\theta_m^\prime(\theta);\mathcal{D}_m)$. Likewise, the statement is also true for local minimizer. 
\end{proof}

\section{Convergence Analysis}
\subsection{Convergence analysis of MAML \cite{Finn}}\label{sec:con-MAML}
We provide theoretical analysis for MAML \cite{Finn}. First, we state the exact form of MAML as follows. 
\begin{subequations}  \label{1step-MAML}
\begin{align} 
      \min_{\theta} &~\frac{1}{M} \sum_{m=1}^{M} \mathcal{L} (\theta_{m}^{\prime}(\theta); \mathcal{D}_m^\prime)  \\
  {\rm s. t.} \,\,\, & \theta_{m}^{\prime}(\theta) = \theta - \beta_{\rm low} \nabla \mathcal{L} (\theta; \mathcal{D}_m),~\forall m\in\mathcal{M}. 
\end{align}
 \end{subequations}
The problem \eqref{1step-MAML} can be reformulated as 
\begin{subequations}  \label{1step-MAML-bi}
\begin{align} 
        \min_{\theta} &~~~ F(\theta)=\frac{1}{M} \sum_{m=1}^{M} \mathcal{L} (\theta_{m}^{\prime}(\theta); \mathcal{D}_m^\prime)   \\
\label{bi-LL}
    {\rm s. t.} \,\,\,& \theta_{m}^{\prime}(\theta) = \arg \min_{\theta_m}\left\{ \nabla\mathcal{L}  (\theta; \mathcal{D}_m)^\top(\theta_m-\theta)+ \frac{1}{2\beta_{\rm low}}\|\theta_m-\theta\|^2\right\}.
\end{align}
 \end{subequations}
 
Next, to show the connection between MAML formulation and ALSET \cite{Chen}, we can concatenate $\theta_m$ as a new vector $\phi=[\theta_1^\top,\cdots, \theta_M^\top]^\top$ and define
\begin{align*}
\allowdisplaybreaks
    F(\theta)=f(\phi^\prime(\theta)),~~~f(\phi)=\frac{1}{M}\sum_{m=1}^M\mathcal{L}(\theta_m;\mathcal{D}_m^\prime),~~~ g(\theta,\phi)=\frac{1}{M}\sum_{m=1}^M\left\{\nabla\mathcal{L}(\theta;\mathcal{D}_m)^\top(\theta_m-\theta)+\frac{\|\theta_m-\theta\|^2}{2\beta_{\rm low}}\right\}
\end{align*}
where $\phi^\prime(\theta)=\arg\min_\phi g(\theta,\phi)$. Then the Jacobian and Hessian of $f$ and $g$ can be computed by
\begin{align*}
    \nabla_{\phi}f(\phi)=\left(                 
  \begin{array}{l}   
    \nabla\mathcal{L}(\theta_1;\mathcal{D}_1^\prime) \\  
    ~~~~~~~~~~\vdots \\
    \nabla\mathcal{L}(\theta_M;\mathcal{D}_M^\prime)  
  \end{array}\right), ~~~
  \nabla_{\phi\theta}g\left(\theta,\phi\right)=\left(                 
  \begin{array}{l}   
    \nabla^2\mathcal{L}(\theta_1;\mathcal{D}_1)-\beta_{\rm low}^{-1}I \\  
    ~~~~~~~~~~\vdots \\
    \nabla^2\mathcal{L}(\theta_M;\mathcal{D}_M)-\beta_{\rm low}^{-1}I  
  \end{array}
\right),~~~
\nabla_{\phi\phi}g(\theta,\phi)=\beta^{-1}_{\rm low}I
\end{align*}
where $I$ denotes the identity matrix. According to the expression of $\nabla F(\theta)$ in ALSET \cite{Chen}, we can verify that MAML's gradient has the following form
\begin{align}
    \nabla F(\theta)&=-\nabla_{\theta\phi}g(\theta,\phi)\nabla_{\phi\phi}^{-1}g(\theta,\phi)\nabla_\phi f(\phi)\nonumber\\
    &=\frac{1}{M}\sum_{m=1}^M (I-\beta_{\rm low}\nabla^2\mathcal{L}(\theta;\mathcal{D}_m))\nabla\mathcal{L}(\theta_m;\mathcal{D}_m^\prime).
\end{align}
Moreover, since $g(\theta,\phi)$ is a quadratic function with respect to $\phi$, the strongly convexity and Lipschitz continuity assumptions hold.\footnote{ $\nabla^2 g$ is Lipschitz continuous in Assumption 1 in \cite{Chen} can be reduced to $\nabla_{\phi\phi} g$ and $\nabla_{\phi\theta} g$ is Lipschitz continuous, which can be satisfied under Assumption \ref{li-lip}.} Assumptions about upper-level function also holds under Assumption \ref{li-lip}. 

Then, for notational simplicity, we consider the single-sample case with $K=1$ and define three independent samples for stochastic gradient and Hessian computation as $\xi_m:=(x,y)\sim \mathcal{D}_m,\psi_m:=(x,y)\sim \mathcal{D}_m, \xi^\prime_m:=(x,y)\sim\mathcal{D}_m^\prime$, so the corresponding $K$-batch gradient and Hessian estimators used in MAML algorithms can be written as
\begin{align*}
    &\nabla\mathcal{L}(\theta;\mathcal{D}_m,\xi_m)=\frac{1}{K}\sum_{\xi_m\sim \mathcal{D}_m}\nabla l(\theta,x,y),\\
    &\nabla^2\mathcal{L}(\theta;\mathcal{D}_m,\psi_m)=\frac{1}{K}\sum_{\psi_m \sim \mathcal{D}_m}\nabla^2 l(\theta,x,y),\\ &\nabla\mathcal{L}(\theta;\mathcal{D}^\prime_m,\xi^\prime_m)=\frac{1}{K}\sum_{\xi_m^\prime\sim\mathcal{D}_m^\prime}\nabla l(\theta,x,y).
\end{align*}
Based on these notations, we can write the stochastic update of MAML algorithm \cite{Finn} as
\begin{align*}
    &\theta^{t+1}=\theta^t-\beta_{\rm up}\widetilde\nabla_{\theta^t}\mathcal{L}(\theta^t;\mathcal{D}_m)=\theta^t-\frac{\beta_{\rm up}}{M}\sum_{m=1}^M (I-\beta_{\rm low}\nabla^2\mathcal{L}(\theta^t;\mathcal{D}_m,\psi_m))\nabla\mathcal{L}(\theta_m^{t+1};\mathcal{D}_m^\prime,\xi_m^\prime)\\
    &\theta_m^{t+1}=\theta^t-\beta_{\rm low}\nabla\mathcal{L}(\theta^t;\mathcal{D}_m,\xi_m).
\end{align*}

Then MAML algorithm can be seen as a special case of ALSET, so we have the following Lemma. 

\begin{lemma}
    Under Assumption \ref{li-lip}--\ref{variance}, and choosing stepsizes $\beta_{\rm low},\beta_{\rm up}={\cal O}(\frac{1}{\sqrt{T}})$ with some proper constants, we can get that the iterates $\{\theta^t\}$ generated by MAML \cite{Finn} satisfy
    \begin{align*}
        \frac{1}{T}\sum_{t=1}^T\mathbb{E}\left[\|\nabla F(\theta^t)\|^2\right]={\cal O}\left(\frac{1}{\sqrt{T}}\right). 
    \end{align*}
\end{lemma}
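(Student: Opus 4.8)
The plan is to show that the stochastic MAML iteration of \cite{Finn} is a concrete instance of the ALSET scheme of \cite{Chen} with a single lower-level step, and then to invoke ALSET's nonconvex convergence guarantee essentially verbatim. Using the reformulation \eqref{1step-MAML-bi} and the concatenation $\phi=[\theta_1^\top,\dots,\theta_M^\top]^\top$ introduced above, the MAML objective is $F(\theta)=f(\phi'(\theta))$ with the quadratic-in-$\phi$ lower-level surrogate $g(\theta,\phi)$ and the explicit Jacobians and Hessians already recorded. The key structural observation is that, because $g(\theta,\cdot)$ is a separable quadratic with Hessian proportional to $\beta_{\rm low}^{-1}I$, a single gradient step on the lower level, started from $\theta$ with the appropriate stepsize, already returns the \emph{exact} surrogate minimizer $\theta_m'(\theta)=\theta-\beta_{\rm low}\nabla\mathcal{L}(\theta;\mathcal{D}_m)$; hence the ``one inner step'' used by MAML introduces no truncation bias, and the MAML hypergradient coincides with $-\nabla_{\theta\phi}g\,\nabla_{\phi\phi}^{-1}g\,\nabla_\phi f$, exactly the direction ALSET estimates.

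First I would verify, term by term, that Assumptions \ref{li-lip}--\ref{variance} imply the standing assumptions of ALSET's main theorem. (i) Lower-level regularity: $\nabla_{\phi\phi}g$ is a positive multiple of $I$, so $g(\theta,\cdot)$ is strongly convex with Lipschitz gradient in $\phi$, uniformly in $\theta$; moreover $\nabla_{\phi\phi}g$ is constant (hence trivially Lipschitz) and the diagonal blocks of $\nabla_{\phi\theta}g$ are of the form $\nabla^2\mathcal{L}(\theta;\mathcal{D}_m)-\beta_{\rm low}^{-1}I$, which are $\ell_2$-Lipschitz in $\theta$ by Assumption \ref{li-lip} --- this is precisely the weakening of the ``$\nabla^2 g$ Lipschitz'' hypothesis flagged in the footnote, and it is all ALSET's proof actually uses. (ii) Upper-level regularity: $\nabla_\phi f$ has blocks $\nabla\mathcal{L}(\theta_m;\mathcal{D}_m')$, which are $\ell_1$-Lipschitz, and $\|\nabla\mathcal{L}(\cdot;\mathcal{D}_m')\|\le\ell_0$ by Lipschitz continuity of $\mathcal{L}(\cdot;\mathcal{D}_m')$, so $f$ has bounded, Lipschitz gradient. (iii) Stochastic oracles: by Assumption \ref{variance} the estimators $\widetilde\nabla\mathcal{L}(\theta;\mathcal{D}_m)$, $\widetilde\nabla^2\mathcal{L}(\theta;\mathcal{D}_m)$, $\widetilde\nabla\mathcal{L}(\theta;\mathcal{D}_m')$ are unbiased with variance at most $\sigma^2$, and the sample draws $\xi_m,\psi_m,\xi_m'$ are mutually independent, so the product-form MAML hypergradient estimator $\frac1M\sum_m(I-\beta_{\rm low}\nabla^2\mathcal{L}(\theta^t;\mathcal{D}_m,\psi_m))\nabla\mathcal{L}(\theta_m^{t+1};\mathcal{D}_m',\xi_m')$ fits the bias-variance bookkeeping that ALSET requires of its hypergradient surrogate.

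With these identifications in place, I would simply apply ALSET's convergence theorem: choosing $\beta_{\rm low},\beta_{\rm up}=\Theta(1/\sqrt{T})$ with constants dictated by $\ell_0,\ell_1,\ell_2,\sigma^2$ yields $\frac1T\sum_{t=1}^T\mathbb{E}\|\nabla F(\theta^t)\|^2=\mathcal{O}(1/\sqrt{T})$, which is the claim.

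I expect the main obstacle to be the bookkeeping in (ii)--(iii) rather than any new idea: one must check that (a) the weakened smoothness hypothesis (only $\nabla_{\phi\phi}g$ and $\nabla_{\phi\theta}g$ Lipschitz, not the full $\nabla^2 g$) does not break ALSET's descent-lemma estimates --- this is where I would re-inspect the relevant steps of \cite{Chen} --- and that (b) the particular estimator used by MAML lies in the class for which ALSET's variance bounds were proved, in particular that the Jacobian approximation $I-\beta_{\rm low}\widetilde\nabla^2\mathcal{L}(\theta;\mathcal{D}_m)$ enters multiplicatively against a conditionally independent copy of the lower-solution gradient, so that the two variances combine in the expected way. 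Once these checks are discharged, the $\mathcal{O}(1/\sqrt{T})$ rate follows immediately from \cite{Chen}.
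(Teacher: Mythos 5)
Your proposal is correct and follows essentially the same route as the paper: Appendix C.1 likewise reformulates one-step MAML as the bilevel problem \eqref{1step-MAML-bi}, concatenates the task parameters into $\phi$, observes that the lower-level surrogate is a separable quadratic (so strong convexity and the Lipschitz conditions hold, with the same footnoted weakening of the $\nabla^2 g$ hypothesis), identifies the MAML hypergradient with $-\nabla_{\theta\phi}g\,\nabla_{\phi\phi}^{-1}g\,\nabla_\phi f$, and then invokes ALSET's convergence theorem with $\beta_{\rm low},\beta_{\rm up}={\cal O}(1/\sqrt{T})$. No substantive differences.
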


\subsection{Convergence analysis of a generic biased MAML}\label{sec:con-biasMAML}

Since Sharp-MAML can be treated as biased update version of MAML \cite{Finn}, we first analyze the general biased MAML algorithm. Suppose that biased MAML update with
\begin{align*}
    &\theta^{t+1}=\theta^t-\frac{\beta_{\rm up}}{M}\sum_{m=1}^M (I-\beta_{\rm low}\hat\nabla^2\mathcal{L}(\theta^t;\mathcal{D}_m,\psi_m))\nabla\mathcal{L}(\hat\theta_m^{t+1};\mathcal{D}_m^\prime,\xi_m^\prime)\\
    &\hat\theta_m^{t+1}=\theta^t-\beta_{\rm low}\hat\nabla\mathcal{L}(\theta^t;\mathcal{D}_m,\xi_m)
\end{align*}
where $\hat\nabla^2\mathcal{L}(\theta^t;\mathcal{D}_m,\psi_m), \hat\nabla\mathcal{L}(\theta^t;\mathcal{D}_m,\xi_m)$ are biased estimator of $\nabla^2\mathcal{L}(\theta^t;\mathcal{D}_m), \nabla\mathcal{L}(\theta^t;\mathcal{D}_m)$, respectively. With the notation that 
$$\hat\nabla\mathcal{L}(\theta;\mathcal{D}_m)\triangleq\mathbb{E}_{\xi_m}\left[\hat\nabla\mathcal{L}(\theta;\mathcal{D}_m,\xi_m)\right],\hat\nabla^2\mathcal{L}(\theta;\mathcal{D}_m)\triangleq\mathbb{E}_{\psi_m}\left[\hat\nabla^2\mathcal{L}(\theta;\mathcal{D}_m,\psi_m)\right],$$
we make the following assumptions. 

\begin{assumption}[Stochastic derivatives]\label{varaince-bias-maml}
Assume that $\hat\nabla\mathcal{L}(\theta;\mathcal{D}_m,\xi_m),\hat\nabla^2\mathcal{L}(\theta;\mathcal{D}_m,\psi_m)$ are unbiased estimator of $\hat\nabla\mathcal{L}(\theta;\mathcal{D}_m),\hat\nabla^2\mathcal{L}(\theta;\mathcal{D}_m)$ respectively and their variances are bounded by $\sigma_b^2$.  
\end{assumption}

\begin{assumption}\label{bias-maml}
Assume that $\|\hat\nabla^2 \mathcal{L}(\theta^t;\mathcal{D}_m)-\nabla^2 \mathcal{L}(\theta^t;\mathcal{D}_m)\|\leq \gamma_h$, $\|\hat\nabla \mathcal{L}(\theta^t;\mathcal{D}_m)-\nabla \mathcal{L}(\theta^t;\mathcal{D}_m)\|\leq \gamma_g,\forall m\in\mathcal{M}$. 
\end{assumption}

Throughout the proof, we use
\begin{align*}
    &\mathcal{F}_{t}=\sigma\left\{\hat\theta_1^{0},\cdots,\hat\theta_M^0, \theta^{0}, \ldots,\theta^t, \hat\theta_1^{t+1},\ldots,\hat\theta_M^{t+1}\right\},~~ \mathcal{F}_{t}^\prime=\sigma\left\{\hat\theta_1^{0},\cdots,\hat\theta_M^0, \theta^{0}, \ldots,\theta^t\right\}
\end{align*}
where $\sigma\{\cdot\}$ denotes the $\sigma$-algebra generated by random variables. We also denote
\begin{align*}
h^t&\triangleq\frac{1}{M}\sum_{m=1}^M (I-\beta_{\rm low}\hat\nabla^2\mathcal{L}(\theta^t;\mathcal{D}_m,\psi_m))\nabla\mathcal{L}(\hat\theta_m^{t+1};\mathcal{D}_m^\prime,\xi_m^\prime)\\
\bar h^t&\triangleq\mathbb{E}\left[\frac{1}{M}\sum_{m=1}^M (I-\beta_{\rm low}\hat\nabla^2\mathcal{L}(\theta^t;\mathcal{D}_m,\psi_m))\nabla\mathcal{L}(\hat\theta_m^{t+1};\mathcal{D}_m^\prime,\xi_m^\prime)\Big|\mathcal{F}_t\right]\\
&=\frac{1}{M}\sum_{m=1}^M (I-\beta_{\rm low}\hat\nabla^2\mathcal{L}(\theta^t;\mathcal{D}_m))\nabla\mathcal{L}(\hat\theta_m^{t+1};\mathcal{D}_m^\prime).
\end{align*}

\begin{lemma}\label{bs:ht}
Under Assumption \ref{li-lip}--\ref{bias-maml}, we have that
    \begin{align}\label{eq:bsht}
        \mathbb{E}\left[\|\nabla F(\theta^t)-\bar h^t\|^2\right]\leq 4\ell_1^2\beta_{\rm low}^2(\gamma_g^2+\sigma_b^2)+4\beta_{\rm low}^2\left(\ell_0^2\gamma_h^2+4(\gamma_h^2+\ell_1^2)\ell_1^2\beta_{\rm low}^2(\gamma_g^2+\sigma_b^2)\right).
    \end{align}
\end{lemma}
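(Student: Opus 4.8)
The plan is to bound the error $\nabla F(\theta^t)-\bar h^t$ by splitting it into two sources of bias, coming respectively from the biased inner gradient $\hat\nabla\mathcal{L}(\theta^t;\mathcal{D}_m)$ used to produce $\hat\theta_m^{t+1}$ and from the biased Hessian $\hat\nabla^2\mathcal{L}(\theta^t;\mathcal{D}_m)$ appearing in the Jacobian factor. Recall that $\nabla F(\theta^t)=\frac{1}{M}\sum_m (I-\beta_{\rm low}\nabla^2\mathcal{L}(\theta^t;\mathcal{D}_m))\nabla\mathcal{L}(\theta_m^{t+1};\mathcal{D}_m')$ where $\theta_m^{t+1}=\theta^t-\beta_{\rm low}\nabla\mathcal{L}(\theta^t;\mathcal{D}_m)$ is the \emph{exact} one-step update, whereas $\bar h^t=\frac{1}{M}\sum_m (I-\beta_{\rm low}\hat\nabla^2\mathcal{L}(\theta^t;\mathcal{D}_m))\nabla\mathcal{L}(\hat\theta_m^{t+1};\mathcal{D}_m')$. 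First I would add and subtract the hybrid term $\frac{1}{M}\sum_m (I-\beta_{\rm low}\nabla^2\mathcal{L}(\theta^t;\mathcal{D}_m))\nabla\mathcal{L}(\hat\theta_m^{t+1};\mathcal{D}_m')$, so that one difference isolates the Hessian bias (times the validation gradient, which is bounded by $\ell_0$ under Assumption \ref{li-lip}) and the other isolates the change of the validation gradient from $\theta_m^{t+1}$ to $\hat\theta_m^{t+1}$ (controlled by its $\ell_1$-Lipschitzness times $\|\theta_m^{t+1}-\hat\theta_m^{t+1}\|$, and the prefactor $\|I-\beta_{\rm low}\nabla^2\mathcal{L}\|$ is bounded since $\beta_{\rm low}={\cal O}(1/\sqrt T)$ is small). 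Then $\|\theta_m^{t+1}-\hat\theta_m^{t+1}\|=\beta_{\rm low}\|\hat\nabla\mathcal{L}(\theta^t;\mathcal{D}_m,\xi_m)-\nabla\mathcal{L}(\theta^t;\mathcal{D}_m)\|$, whose squared expectation decomposes into the deterministic bias part $\gamma_g^2$ (Assumption \ref{bias-maml}) and the variance part $\sigma_b^2$ (Assumption \ref{varaince-bias-maml}).

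The technical care needed is in handling the cross terms and the expectations correctly. I would use $\|a+b\|^2\le 2\|a\|^2+2\|b\|^2$ (and a further $2\times 2$ split for the four-way decomposition, which is where the leading constant $4$ in the bound comes from) and take expectations with respect to $\psi_m$ and $\xi_m$; the stochastic Hessian contributes a variance term that also scales like $\ell_1^2\beta_{\rm low}^2\sigma_b^2$, which is why $\sigma_b^2$ appears bundled with $\gamma_g^2$ everywhere. The term $\ell_0^2\gamma_h^2$ in \eqref{eq:bsht} is the pure Hessian-bias contribution (validation gradient magnitude $\le\ell_0$), and the term $(\gamma_h^2+\ell_1^2)\ell_1^2\beta_{\rm low}^2(\gamma_g^2+\sigma_b^2)$ accounts for the prefactor $\|I-\beta_{\rm low}\hat\nabla^2\mathcal{L}\|^2\le 2+2\beta_{\rm low}^2\|\hat\nabla^2\mathcal{L}\|^2$, bounded via $\|\hat\nabla^2\mathcal{L}\|\le\gamma_h+\ell_1$ (triangle inequality on Assumption \ref{bias-maml} and $\ell_1$-Lipschitzness of $\nabla\mathcal{L}(\cdot;\mathcal{D}_m)$ implying $\|\nabla^2\mathcal{L}\|\le\ell_1$), multiplying the $\beta_{\rm low}^2(\gamma_g^2+\sigma_b^2)$ inner-update error. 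Absorbing $M$ by Jensen over the $M$ tasks keeps everything task-uniform.

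The main obstacle I anticipate is tracking the propagation of the inner-update perturbation through the argument of $\nabla\mathcal{L}(\cdot;\mathcal{D}_m')$ while simultaneously keeping the Hessian-prefactor bound $\|I-\beta_{\rm low}\hat\nabla^2\mathcal{L}\|$ from blowing up; this requires simultaneously exploiting smallness of $\beta_{\rm low}$ and boundedness of the (biased) Hessian, and being careful that the $\gamma_h^2$ inside the last bracket really does come from the prefactor and not double-counted with the $\ell_0^2\gamma_h^2$ term. Once the decomposition is set up cleanly, the rest is routine application of Assumptions \ref{li-lip}, \ref{varaince-bias-maml}, and \ref{bias-maml} and collecting constants.
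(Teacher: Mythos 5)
Your proposal matches the paper's proof in all essentials: the same add-and-subtract decomposition into (i) the validation-gradient shift, controlled by $\ell_1$-Lipschitzness of $\nabla\mathcal{L}(\cdot;\mathcal{D}_m')$ together with $\mathbb{E}\|\theta_m'(\theta^t)-\hat\theta_m^{t+1}\|^2\le 2\beta_{\rm low}^2(\gamma_g^2+\sigma_b^2)$, and (ii) the Hessian-bias term, controlled by $\ell_0^2\gamma_h^2$ and $\|\hat\nabla^2\mathcal{L}(\theta^t;\mathcal{D}_m)\|^2\le 2(\gamma_h^2+\ell_1^2)$, yielding the same constants. The one small slip is the later sentence attributing the bundled $\sigma_b^2$ to the "stochastic Hessian" — in $\bar h^t$ the Hessian estimator is already conditionally averaged over $\psi_m$, so the $\sigma_b^2$ arises solely from the variance of the inner stochastic gradient $\hat\nabla\mathcal{L}(\theta^t;\mathcal{D}_m,\xi_m)$ propagated into $\hat\theta_m^{t+1}$, exactly as you correctly state earlier in the proposal.
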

\begin{proof}
Since $\mathbb{E}\left[\hat\theta_m^{t+1}|\mathcal{F}_t^\prime\right]=\theta^t-\beta_{\rm low}\hat\nabla\mathcal{L}(\theta^t;\mathcal{D}_m)$, then from Assumption 4, we  have $\left\|\theta_m^\prime(\theta^t)-\mathbb{E}\left[\hat\theta_m^{t+1}|\mathcal{F}_t^\prime\right]\right\|\leq \beta_{\rm low}\gamma_g$. Taking expectation with respect to $\mathcal{F}_t^\prime$, we get $$\mathbb{E}\left[\|\theta_m^\prime(\theta^t)-\hat\theta_m^{t+1}\|^2\right]\leq2\mathbb{E}\left[\|\theta_m^\prime(\theta^t)-\mathbb{E}\left[\hat\theta_m^{t+1}|\mathcal{F}_t^\prime\right]\|^2\right]+2\mathbb{E}\left[\|\mathbb{E}\left[\hat\theta_m^{t+1}|\mathcal{F}_t^\prime\right]-\hat\theta_m^{t+1}\|^2\right]\leq 2\beta_{\rm low}^2\gamma_g^2+2\beta_{\rm low}^2\sigma_b^2. $$
Then using Lipschitz continuity of $\nabla\mathcal{L}(\theta;\mathcal{D}_m^\prime)$, we obtain
\begin{align}\label{eq--28}
    \mathbb{E}\left\|\nabla\mathcal{L}(\theta_m^\prime(\theta^t);\mathcal{D}_m^\prime)-\nabla\mathcal{L}(\hat\theta_m^{t+1};\mathcal{D}_m^\prime)\right\|^2\leq2\ell_1^2\beta_{\rm low}^2(\gamma_g^2+\sigma_b^2). 
\end{align}
On the other hand, by observing that $$\left\|\hat\nabla^2\mathcal{L}(\theta^t;\mathcal{D}_m)\right\|^2\leq 2\left\|\nabla^2\mathcal{L}(\theta^t;\mathcal{D}_m)\right\|^2+2\left\|\nabla^2\mathcal{L}(\theta^t;\mathcal{D}_m)-\hat\nabla^2\mathcal{L}(\theta^t;\mathcal{D}_m)\right\|^2\leq2(\gamma_h^2+\ell_1^2),$$
we get 
\begin{align}
    &~~~~~~\mathbb{E}\left\|\nabla^2\mathcal{L}(\theta^t;\mathcal{D}_m)\nabla\mathcal{L}(\theta_m^\prime(\theta^t);\mathcal{D}_m^\prime)-\hat\nabla^2\mathcal{L}(\theta^t;\mathcal{D}_m)\nabla\mathcal{L}(\hat\theta_m^{t+1};\mathcal{D}_m^\prime)\right\|^2\nonumber\\
    &\leq2\mathbb{E}\left\|\nabla^2\mathcal{L}(\theta^t;\mathcal{D}_m)\!-\!\hat\nabla^2\mathcal{L}(\theta^t;\mathcal{D}_m)\right\|^2\left\|\nabla\mathcal{L}(\theta_m^\prime(\theta^t);\mathcal{D}_m^\prime)\right\|^2\!+\!2\mathbb{E}\left\|\hat\nabla^2\mathcal{L}(\theta^t;\mathcal{D}_m)\right\|^2\left\|\nabla\mathcal{L}(\theta_m^\prime(\theta^t);\mathcal{D}_m^\prime)\!-\!\nabla\mathcal{L}(\hat\theta_m^{t+1};\mathcal{D}_m^\prime)\right\|^2\nonumber\\
    &\leq 2\ell_0^2\gamma_h^2+8(\gamma_h^2+\ell_1^2)\ell_1^2\beta_{\rm low}^2(\gamma_g^2+\sigma_b^2).\label{eq--29}
\end{align}
Thus, using \eqref{eq--28} and \eqref{eq--29}, we get
\begin{align*}
    &~~~~~~\mathbb{E}\left[\|\nabla F(\theta^t)-\bar h^t\|^2\right]\\
    &=\mathbb{E}\left\|\frac{1}{M}\sum_{m=1}^M\left[ (I-\beta_{\rm low}\nabla^2\mathcal{L}(\theta^t;\mathcal{D}_m))\nabla\mathcal{L}(\theta_m^\prime(\theta^t);\mathcal{D}_m^\prime)-(I-\beta_{\rm low}\hat\nabla^2\mathcal{L}(\theta^t;\mathcal{D}_m))\nabla\mathcal{L}(\hat\theta_m^{t+1};\mathcal{D}_m^\prime)\right]\right\|^2\\
    &\leq\frac{2}{M}\sum_{m=1}^M\mathbb{E}\left\|\nabla\mathcal{L}(\theta_m^\prime(\theta^t);\mathcal{D}_m^\prime)-\nabla\mathcal{L}(\hat\theta_m^{t+1};\mathcal{D}_m^\prime)\right\|^2\\
    &~~~+\frac{2\beta_{\rm low}^2}{M}\sum_{m=1}^M\mathbb{E}\left\|\nabla^2\mathcal{L}(\theta^t;\mathcal{D}_m)\nabla\mathcal{L}(\theta_m^\prime(\theta^t);\mathcal{D}_m^\prime)-\hat\nabla^2\mathcal{L}(\theta^t;\mathcal{D}_m)\nabla\mathcal{L}(\hat\theta_m^{t+1};\mathcal{D}_m^\prime)\right\|^2\\
    &\leq 4\ell_1^2\beta_{\rm low}^2(\gamma_g^2+\sigma_b^2)+4\beta_{\rm low}^2\left(\ell_0^2\gamma_h^2+4(\gamma_h^2+\ell_1^2)\ell_1^2\beta_{\rm low}^2(\gamma_g^2+\sigma_b^2)\right)
\end{align*}
from which the proof is complete. 
\end{proof}

\begin{lemma}\label{biasmaml-con}
    Under Assumption \ref{li-lip}--\ref{bias-maml}, and choosing stepsizes $\beta_{\rm low},\beta_{\rm up}={\cal O}(\frac{1}{\sqrt{T}})$, and  $\gamma_g,\gamma_h={\cal O}(1)$ with some proper constants, we can get that the iterates $\{\theta^t\}$ generated by biased MAML satisfy
    \begin{align*}
        \frac{1}{T}\sum_{t=1}^T\mathbb{E}\left[\|\nabla F(\theta^t)\|^2\right]={\cal O}\left(\frac{1}{\sqrt{T}}\right). 
    \end{align*}
\end{lemma}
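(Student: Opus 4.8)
The plan is to treat the biased MAML recursion $\theta^{t+1}=\theta^t-\beta_{\rm up}h^t$ as an \emph{inexact} stochastic gradient step for the smooth nonconvex objective $F$ and to run the standard descent-lemma plus telescoping argument, with the deterministic bias of the search direction already controlled by Lemma \ref{bs:ht}. Note that, unlike the unbiased case in Appendix \ref{sec:con-MAML}, we cannot simply invoke the ALSET analysis here because the estimators are biased, so the extra bias terms have to be carried through explicitly.

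First I would record that $F$ is $L_F$-smooth: under Assumption \ref{li-lip}, $\nabla F(\theta)=\frac1M\sum_{m=1}^M(I-\beta_{\rm low}\nabla^2\mathcal{L}(\theta;\mathcal{D}_m))\nabla\mathcal{L}(\theta_m'(\theta);\mathcal{D}_m')$ is Lipschitz with a constant $L_F=L_F(\ell_0,\ell_1,\ell_2,\beta_{\rm low})$, since $\nabla^2\mathcal{L}(\cdot;\mathcal{D}_m)$ is $\ell_2$-Lipschitz, $\nabla\mathcal{L}(\cdot;\mathcal{D}_m')$ is $\ell_1$-Lipschitz and $\theta\mapsto\theta_m'(\theta)$ is $(1+\beta_{\rm low}\ell_1)$-Lipschitz; this is exactly the upper-level smoothness already noted via the ALSET reduction. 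Applying the descent lemma together with $\mathbb{E}[h^t\mid\mathcal{F}_t]=\bar h^t$ gives
\[
\mathbb{E}[F(\theta^{t+1})\mid\mathcal{F}_t]\le F(\theta^t)-\beta_{\rm up}\langle\nabla F(\theta^t),\bar h^t\rangle+\tfrac{L_F\beta_{\rm up}^2}{2}\,\mathbb{E}[\|h^t\|^2\mid\mathcal{F}_t].
\]
Then I would expand the inner product via $-\langle a,b\rangle=-\tfrac12\|a\|^2-\tfrac12\|b\|^2+\tfrac12\|a-b\|^2$ with $a=\nabla F(\theta^t)$, $b=\bar h^t$, and use the bias--variance identity $\mathbb{E}[\|h^t\|^2\mid\mathcal{F}_t]=\|\bar h^t\|^2+\mathbb{E}[\|h^t-\bar h^t\|^2\mid\mathcal{F}_t]$; the $\|\bar h^t\|^2$ contributions then combine into $-\tfrac{\beta_{\rm up}}{2}(1-L_F\beta_{\rm up})\|\bar h^t\|^2\le0$ as soon as $\beta_{\rm up}\le 1/L_F$, leaving
\[
\tfrac{\beta_{\rm up}}{2}\|\nabla F(\theta^t)\|^2\le F(\theta^t)-\mathbb{E}[F(\theta^{t+1})\mid\mathcal{F}_t]+\tfrac{\beta_{\rm up}}{2}\|\nabla F(\theta^t)-\bar h^t\|^2+\tfrac{L_F\beta_{\rm up}^2}{2}\,\mathbb{E}[\|h^t-\bar h^t\|^2\mid\mathcal{F}_t].
\]

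Next I would bound the two error terms. By Lemma \ref{bs:ht}, $\mathbb{E}\|\nabla F(\theta^t)-\bar h^t\|^2=\mathcal{O}\!\big(\beta_{\rm low}^2(\gamma_g^2+\gamma_h^2+\sigma_b^2)\big)$. For the variance term, $\mathbb{E}\|h^t-\bar h^t\|^2\le\mathbb{E}\|h^t\|^2\le C$ for an absolute constant $C=C(\ell_0,\ell_1,\gamma_h,\sigma_b^2)$ uniform in $t$, using $\|\nabla\mathcal{L}(\cdot;\mathcal{D}_m')\|\le\ell_0$ (from $\ell_0$-Lipschitzness of $\mathcal{L}(\cdot;\mathcal{D}_m')$), $\|\nabla^2\mathcal{L}(\cdot;\mathcal{D}_m)\|\le\ell_1$, Assumption \ref{varaince-bias-maml}, and $\beta_{\rm low}\le1$ to bound each summand of $h^t$ in second moment. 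Taking full expectation, summing over $t=1,\dots,T$, telescoping $F(\theta^1)-\mathbb{E}[F(\theta^{T+1})]\le F(\theta^1)-F^\star$ and dividing by $\beta_{\rm up}T/2$ yields
\[
\frac1T\sum_{t=1}^T\mathbb{E}\|\nabla F(\theta^t)\|^2\le\frac{2(F(\theta^1)-F^\star)}{\beta_{\rm up}T}+\mathcal{O}\!\big(\beta_{\rm low}^2(\gamma_g^2+\gamma_h^2+\sigma_b^2)\big)+\mathcal{O}(L_F\beta_{\rm up}),
\]
and substituting $\beta_{\rm up},\beta_{\rm low}=\Theta(1/\sqrt T)$ with $\gamma_g,\gamma_h=\mathcal{O}(1)$ makes the first and third terms $\mathcal{O}(1/\sqrt T)$ and the middle term $\mathcal{O}(1/T)$, which proves the claimed $\mathcal{O}(1/\sqrt T)$ rate.

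The underlying argument is the textbook inexact-SGD recursion, so the main obstacle is not a single inequality but the bookkeeping: being careful with the filtration (recall $\hat\theta_m^{t+1}$ is $\mathcal{F}_t$-measurable, so the tower property must be applied in the right order), pinning down $L_F$ and its dependence on $\beta_{\rm low}$ \emph{before} the stepsize choice, and verifying that the constants hidden inside $\mathbb{E}\|h^t-\bar h^t\|^2$ and inside the bound of Lemma \ref{bs:ht} are genuinely independent of $t$ and of $T$, so that the $\mathcal{O}(\cdot)$ terms really collapse to $\Theta(1/\sqrt T)$ after the substitution.
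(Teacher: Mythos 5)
Your proposal is correct and follows essentially the same route as the paper's proof: the descent lemma for the $L_F$-smooth $F$, the polarization identity to isolate $\|\nabla F(\theta^t)\|^2$, Lemma \ref{bs:ht} for the bias term, a $t$-uniform ${\cal O}(1)$ bound on the conditional variance of $h^t$, and telescoping with $\beta_{\rm up},\beta_{\rm low}=\Theta(1/\sqrt{T})$. The only cosmetic difference is that the paper computes the variance constant $\tilde\sigma^2$ explicitly by splitting $h^t-\bar h^t$ into gradient-noise and Hessian-noise contributions, whereas you use the cruder bound $\mathbb{E}\|h^t-\bar h^t\|^2\le\mathbb{E}\|h^t\|^2\le C$; both suffice for the claimed rate.
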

\begin{proof}\allowdisplaybreaks
First we bound the variance of stochastic biased meta gradient estimator $h^t$ as
\begin{align*}
    \mathbb{E}\left[\|h^t-\bar h^t\|^2\Big|\mathcal{F}_t\right]&\leq \frac{2}{M}\sum_{m=1}^M\mathbb{E}\left[\|(I-\beta_{\rm low}\hat\nabla^2\mathcal{L}(\theta^t;\mathcal{D}_m,\psi_m))\nabla\mathcal{L}(\hat\theta_m^{t+1};\mathcal{D}_m^\prime,\xi_m^\prime)\right.\\
    &~~~~\left.-(I-\beta_{\rm low}\hat\nabla^2\mathcal{L}(\theta^t;\mathcal{D}_m))\nabla\mathcal{L}(\hat\theta_m^{t+1};\mathcal{D}_m^\prime)\|^2\Big|\mathcal{F}_t\right]\\
    &\leq \frac{4}{M}\sum_{m=1}^M\mathbb{E}\left[\|\nabla\mathcal{L}(\hat\theta_m^{t+1};\mathcal{D}_m^\prime,\xi_m^\prime)-\nabla\mathcal{L}(\hat\theta_m^{t+1};\mathcal{D}_m^\prime)\|^2\Big|\mathcal{F}_t\right]\\
    &~~~~+\frac{4\beta_{\rm low}^2}{M}\sum_{m=1}^M\mathbb{E}\left[\|\hat\nabla^2\mathcal{L}(\theta^t;\mathcal{D}_m,\psi_m)\nabla\mathcal{L}(\hat\theta_m^{t+1};\mathcal{D}_m^\prime,\xi_m^\prime)-\hat\nabla^2\mathcal{L}(\theta^t;\mathcal{D}_m)\nabla\mathcal{L}(\hat\theta_m^{t+1};\mathcal{D}_m^\prime)\|^2\Big|\mathcal{F}_t\right]\\
    &\leq 4\ell_1^2\sigma^2+\frac{8\beta_{\rm low}^2}{M}\sum_{m=1}^M\mathbb{E}\left[\|\nabla\mathcal{L}(\hat\theta_m^{t+1};\mathcal{D}_m^\prime,\xi_m^\prime)\|^2\|\hat\nabla^2\mathcal{L}(\theta^t;\mathcal{D}_m,\psi_m)-\hat\nabla^2\mathcal{L}(\theta^t;\mathcal{D}_m)\|^2\Big|\mathcal{F}_t\right]\\
    &~~~~~+\frac{8\beta_{\rm low}^2}{M}\sum_{m=1}^M\mathbb{E}\left[\|\hat\nabla^2\mathcal{L}(\theta^t;\mathcal{D}_m)\|^2\|\nabla\mathcal{L}(\hat\theta_m^{t+1};\mathcal{D}_m^\prime,\xi_m^\prime)-\nabla\mathcal{L}(\hat\theta_m^{t+1};\mathcal{D}_m^\prime)\|^2\Big|\mathcal{F}_t\right]\\
    &\leq 4\ell_1^2\sigma^2+ 8\beta_{\rm low}^2\sigma_b^2\mathbb{E}\left[\|\nabla\mathcal{L}(\hat\theta_m^{t+1};\mathcal{D}_m^\prime,\xi_m^\prime)\|^2\Big|\mathcal{F}_t\right]+16(\gamma_h^2+\ell_1^2)\beta_{\rm low}^2\sigma^2\\
    &\leq 4\ell_1^2\sigma^2+16(\gamma_h^2+\ell_1^2)\beta_{\rm low}^2\sigma^2+ 16\beta_{\rm low}^2\sigma_b^2 (\sigma^2+\ell_0^2)\triangleq\tilde\sigma^2 \numberthis\label{tildesigma}
\end{align*}
where \eqref{tildesigma} comes from
\begin{align*}
    \mathbb{E}\left[\|\nabla\mathcal{L}(\hat\theta_m^{t+1};\mathcal{D}_m^\prime,\xi_m^\prime)\|^2\Big|\mathcal{F}_t\right]&\leq 2\|\nabla\mathcal{L}(\hat\theta_m^{t+1};\mathcal{D}_m^\prime)\|^2+2\mathbb{E}\left[\|\nabla\mathcal{L}(\hat\theta_m^{t+1};\mathcal{D}_m^\prime,\xi_m^\prime)-\nabla\mathcal{L}(\hat\theta_m^{t+1};\mathcal{D}_m^\prime)\|^2\Big|\mathcal{F}_t\right]\leq 2(\ell_0^2+\sigma^2).
\end{align*}

Then according to Lemma 4 in \cite{Chen}, $F$ is smooth with constant $L_F=\mathcal{O}\left(1\right)$. Using the smoothness property with Lemma 2 in \cite{Chen}, it follows that 
\begin{align*}
\mathbb{E}\left[F(\theta^{t+1})|\mathcal{F}_t\right] & \leq F(\theta^{t})+\mathbb{E}\left[\left\langle\nabla  F(\theta^{t}), \theta^{t+1}-\theta^{t}\right\rangle|\mathcal{F}_t\right]+\frac{L_{F}}{2} \mathbb{E}\left[\left\|\theta^{t+1}-\theta^{t}\right\|^{2}|\mathcal{F}_t\right]  \\
&\leq  F(\theta^{t})-\beta_{\rm up}\left\langle\nabla  F(\theta^{t}), \bar h^t\right\rangle+\frac{L_{F} \beta_{\rm up}^{2}}{2}\mathbb{E}\left[\|h^t\|^2|\mathcal{F}_t\right]\\
&\stackrel{(a)}{\leq} F(\theta^{t})-\beta_{\rm up}\left\langle\nabla  F(\theta^{t}), \bar h^t\right\rangle+\frac{L_{F} \beta_{\rm up}^{2}}{2}\left\|\bar h^t\right\|^{2}+\frac{L_{F} \beta_{\rm up}^{2}\tilde\sigma^2}{2}\\
& \stackrel{(b)}{=}  F(\theta^{t})-\frac{\beta_{\rm up}}{2}\left\|\nabla  F(\theta^{t})\right\|^{2}-\left(\frac{\beta_{\rm up}}{2}-\frac{L_{F} \beta_{\rm up}^{2}}{2}\right)\left\|\bar h^t\right\|^{2}+\frac{\beta_{\rm up}}{2}\left\|\nabla  F(\theta^{t})-\bar h^t\right\|^{2}+\frac{L_{F} \beta_{\rm up}^{2}\tilde\sigma^2}{2}
\end{align*}
where (a) uses $\mathbb{E}\left[\|A\|^2|B\right]=(\mathbb{E}\left[A|B\right])^2+\mathbb{E}\left[\|A-\mathbb{E}\left[A|B\right]\|^2|B\right]$ and \eqref{tildesigma}, and (b) uses $2 a^{\top} b=\|a\|^{2}+\|b\|^{2}-\|a-b\|^{2}$. Then using the result in Lemma \ref{bs:ht} and choosing $\beta_{\rm up}\leq\frac{1}{L_F}$, telescoping and rearranging it, we obtain that
\begin{align}\label{conv-maml}
    \frac{1}{T}\sum_{t=1}^T \mathbb{E}\left[\left\|\nabla F(\theta^{t})\right\|^{2}\right]\leq \frac{2F(\theta^1)}{\beta_{\rm up}T}+L_{F} \beta_{\rm up}\tilde\sigma^2+4\beta_{\rm low}^2\left(\ell_1^2(\gamma_g^2+\sigma_b^2)+\ell_0^2\gamma_h^2+4(\gamma_h^2+\ell_1^2)\ell_1^2\beta_{\rm low}^2(\gamma_g^2+\sigma_b^2)\right).
\end{align}
Choosing $\beta_{\rm up},\beta_{\rm low}={\cal O}(\frac{1}{\sqrt{T}})$ and $\gamma_g,\gamma_h={\cal O}(1)$, we can get the ${\cal O}(\frac{1}{\sqrt{T}})$ convergence results of biased MAML. 
\end{proof}

\subsection{Convergence analysis of Sharp-MAML}\label{smaml-convergence}
Thanks to the discussion in Section \ref{sec:con-biasMAML}, the convergence of Sharp-MAML is straightforward. Here we only prove for Sharp-MAML$_{\rm both}$ since same results for the other two variants can be derived by setting $\alpha_{\rm up}=0$ or $\alpha_{\rm low}=0$ accordingly. 

\begin{lemma}
    Under Assumption \ref{li-lip}--\ref{variance}, and choosing stepsizes $\beta_{\rm low},\beta_{\rm up}={\cal O}(\frac{1}{\sqrt{T}})$ and perturbation radii $\alpha_{\rm up}={\cal O}(\frac{1}{\sqrt{T}})$, $\alpha_{\rm low}={\cal O}(1)$,  with some proper constants, we can get that the iterates $\{\theta^t\}$ generated by Sharp-MAML$_{\rm both}$ satisfy
    \begin{align*}
        \frac{1}{T}\sum_{t=1}^T\mathbb{E}\left[\|\nabla F(\theta^t)\|^2\right]={\cal O}\left(\frac{1}{\sqrt{T}}\right). 
    \end{align*}
\end{lemma}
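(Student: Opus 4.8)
The plan is to exhibit one iteration of Sharp-MAML$_{\rm both}$ as an instance of the generic \emph{biased} MAML recursion studied in Section~\ref{sec:con-biasMAML}, so that the claim follows directly from Lemma~\ref{biasmaml-con}. Unrolling \eqref{eq7}--\eqref{eq12}, a Sharp-MAML$_{\rm both}$ step produces the fine-tuned iterate $\tilde\theta_m^2(\theta^t)=\theta^t+\epsilon(\theta^t)-\beta_{\rm low}\widetilde\nabla\mathcal{L}(\theta^t+\epsilon(\theta^t)+\epsilon_m(\theta^t);\mathcal{D}_m)$ and then updates $\theta^{t+1}=\theta^t-\beta_{\rm up}\sum_{m=1}^{M}(I-\beta_{\rm low}\widetilde\nabla^2\mathcal{L}(\theta^t+\epsilon(\theta^t)+\epsilon_m(\theta^t);\mathcal{D}_m))\widetilde\nabla\mathcal{L}(\tilde\theta_m^2(\theta^t);\mathcal{D}_m^\prime)$. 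Introducing the biased oracles $\hat\nabla\mathcal{L}(\theta^t;\mathcal{D}_m,\xi_m)\triangleq\widetilde\nabla\mathcal{L}(\theta^t+\epsilon(\theta^t)+\epsilon_m(\theta^t);\mathcal{D}_m)-\epsilon(\theta^t)/\beta_{\rm low}$ and $\hat\nabla^2\mathcal{L}(\theta^t;\mathcal{D}_m,\psi_m)\triangleq\widetilde\nabla^2\mathcal{L}(\theta^t+\epsilon(\theta^t)+\epsilon_m(\theta^t);\mathcal{D}_m)$, the Sharp-MAML$_{\rm both}$ recursion coincides term by term with the biased MAML recursion of Section~\ref{sec:con-biasMAML} with $\hat\theta_m^{t+1}=\tilde\theta_m^2(\theta^t)$. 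Hence it suffices to verify Assumptions~\ref{varaince-bias-maml} and~\ref{bias-maml} for these oracles and invoke Lemma~\ref{biasmaml-con}.

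The second step bounds the bias. By construction in \eqref{eq7} and \eqref{eq9} we have $\|\epsilon_m(\theta^t)\|\le\alpha_{\rm low}$ and $\|\epsilon(\theta^t)\|\le\alpha_{\rm up}$ almost surely. Using the Lipschitz continuity of $\nabla\mathcal{L}(\cdot;\mathcal{D}_m)$ and $\nabla^2\mathcal{L}(\cdot;\mathcal{D}_m)$ from Assumption~\ref{li-lip}, one checks that Assumption~\ref{bias-maml} holds with $\gamma_g=\ell_1(\alpha_{\rm up}+\alpha_{\rm low})+\alpha_{\rm up}/\beta_{\rm low}$ and $\gamma_h=\ell_2(\alpha_{\rm up}+\alpha_{\rm low})$; likewise, since evaluating a mini-batch gradient or Hessian estimator at a shifted point does not inflate its variance, Assumption~\ref{variance} gives Assumption~\ref{varaince-bias-maml} with $\sigma_b^2={\cal O}(\sigma^2)$. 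Under the prescribed schedule $\beta_{\rm low},\alpha_{\rm up}={\cal O}(1/\sqrt{T})$ and $\alpha_{\rm low}={\cal O}(1)$, the ratio $\alpha_{\rm up}/\beta_{\rm low}={\cal O}(1)$, so $\gamma_g={\cal O}(1)$ and $\gamma_h={\cal O}(1)$ --- precisely the regime covered by Lemma~\ref{biasmaml-con}.

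The final step applies Lemma~\ref{biasmaml-con}, equivalently substitutes the bias estimates of Lemma~\ref{bs:ht} into \eqref{conv-maml}: with $\beta_{\rm up},\beta_{\rm low}={\cal O}(1/\sqrt{T})$ and $\gamma_g,\gamma_h={\cal O}(1)$ every term on the right-hand side of \eqref{conv-maml} is ${\cal O}(1/\sqrt{T})$ (note that $\gamma_g,\gamma_h$ enter only multiplied by $\beta_{\rm low}^2={\cal O}(1/T)$), which yields $\frac1T\sum_{t=1}^{T}\mathbb{E}[\|\nabla F(\theta^t)\|^2]={\cal O}(1/\sqrt{T})$ with $F$ the one-step MAML objective in \eqref{eq2}. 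The statements for Sharp-MAML$_{\rm up}$ and Sharp-MAML$_{\rm low}$ follow verbatim by setting $\epsilon_m(\theta^t)\equiv0$ (then $\gamma_g=\ell_1\alpha_{\rm up}+\alpha_{\rm up}/\beta_{\rm low}$, $\gamma_h=\ell_2\alpha_{\rm up}$) or $\epsilon(\theta^t)\equiv0$ (then $\gamma_g=\ell_1\alpha_{\rm low}$, $\gamma_h=\ell_2\alpha_{\rm low}$, now with no $1/\beta_{\rm low}$ factor, which is why Sharp-MAML$_{\rm low}$ needs no constraint linking $\alpha_{\rm low}$ to the step sizes). The main obstacle I anticipate is bookkeeping rather than new analysis: $\epsilon(\theta^t)$ and $\epsilon_m(\theta^t)$ are themselves data-dependent random vectors, so one must set up the $\sigma$-algebras $\mathcal{F}_t,\mathcal{F}_t^\prime$ and the mean oracles $\hat\nabla\mathcal{L}(\theta^t;\mathcal{D}_m),\hat\nabla^2\mathcal{L}(\theta^t;\mathcal{D}_m)$ so that the unbiased-plus-bounded-bias decomposition underlying Lemma~\ref{bs:ht} still holds; and because the term $\epsilon(\theta^t)/\beta_{\rm low}$ contributes ${\cal O}(1)$ --- not $o(1)$ --- to $\gamma_g$, taking $\alpha_{\rm up}$ to vanish at the same rate as $\beta_{\rm low}$ is exactly what makes the unified three-variant statement go through.
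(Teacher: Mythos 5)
Your proposal is correct and follows essentially the same route as the paper's own proof: both recast one step of Sharp-MAML$_{\rm both}$ as the generic biased MAML recursion by absorbing the perturbations into biased gradient/Hessian oracles (including the $\epsilon(\theta^t)/\beta_{\rm low}$ shift), bound the bias via Lipschitz continuity to get $\gamma_g=\ell_1(\alpha_{\rm up}+\alpha_{\rm low})+\alpha_{\rm up}/\beta_{\rm low}$ and $\gamma_h=\ell_2(\alpha_{\rm up}+\alpha_{\rm low})$, and invoke Lemma~\ref{biasmaml-con} under the schedule $\alpha_{\rm up}={\cal O}(1/\sqrt{T})$, $\alpha_{\rm low}={\cal O}(1)$. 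Your closing remark about the data-dependence of $\epsilon(\theta^t),\epsilon_m(\theta^t)$ and the $\sigma$-algebra bookkeeping is in fact a point the paper's proof passes over silently, so you are if anything slightly more careful.
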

\begin{proof}\allowdisplaybreaks
Recalling the update of Sharp-MAML$_{\rm both}$, we rewrite it as follows. 
\begin{align*}
    &\theta^{t+1}=\theta^t-\frac{\beta_{\rm up}}{M}\sum_{m=1}^M (I-\beta_{\rm low}\nabla^2\mathcal{L}(\theta^t+\epsilon(\theta^t)+\epsilon_m(\theta^t);\mathcal{D}_m,\psi_m))\nabla\mathcal{L}(\hat\theta_m^{t+1};\mathcal{D}_m^\prime,\xi_m^\prime);\\
    &\hat\theta_m^{t+1}=\theta^t+\epsilon(\theta^t)-\beta_{\rm low}\nabla\mathcal{L}(\theta^t+\epsilon(\theta^t)+\epsilon_m(\theta^t);\mathcal{D}_m,\xi_m)\\
    &~~~~~~~~=\theta^t-\beta_{\rm low}\left(\nabla\mathcal{L}(\theta^t+\epsilon(\theta^t)+\epsilon_m(\theta^t);\mathcal{D}_m,\xi_m)-\frac{\epsilon(\theta^t)}{\beta_{\rm low}}\right).
\end{align*}
Since $\nabla\mathcal{L}(\theta;\mathcal{D}_m),\nabla^2\mathcal{L}(\theta;\mathcal{D}_m)$ are Lipschitz continuous with $\ell_1,\ell_2$ according to Assumption \ref{li-lip}, then we have that
\begin{align*}
    &\|\nabla\mathcal{L}(\theta^t+\epsilon(\theta^t)+\epsilon_m(\theta^t);\mathcal{D}_m)-\frac{\epsilon(\theta^t)}{\beta_{\rm low}}-\nabla\mathcal{L}(\theta^t;\mathcal{D}_m)\|\leq \ell_1(\alpha_{\rm up}+\alpha_{\rm low})+\frac{\alpha_{\rm up}}{\beta_{\rm low}}\\
    &\|\nabla^2\mathcal{L}(\theta^t+\epsilon(\theta^t)+\epsilon_m(\theta^t);\mathcal{D}_m)-\nabla^2\mathcal{L}(\theta^t;\mathcal{D}_m)\|\leq \ell_2(\alpha_{\rm up}+\alpha_{\rm low}),
\end{align*}
which satisfies the condition in Lemma \ref{biasmaml-con} if $\alpha_{\rm up}={\cal O}(\frac{1}{\sqrt{T}}),\alpha_{\rm low}={\cal O}(1)$. 
Thus, we arrive at the conclusion. 
\end{proof}

\section{Generalization Analysis}
We build on the recently developed PAC-Bayes bound for meta learning \cite{farid2021generalization}, as restated below.

\begin{lemma}
    \label{lm:pac_bayes_ml}
    Assume the loss function $\mathcal{L}(\cdot)$ is bounded: $0 \leq \mathcal{L}(h, \mathcal{D}) \leq 1$ for any $h$ in the hypothesis space, and any $\mathcal{D}$ in the sample space. 
    For hypotheses $h_{A(\theta, \mathcal{D})}$ learned with $\gamma_{\mathrm{A}}$ uniformly stable algorithm $A$, data-independent prior $P_{\theta, 0}$ over initializations $\theta$, 
    and $\delta \in(0,1)$, with probability at least $1-\delta$ over a sampling of the meta-training dataset $\mathcal{D} \sim \mathcal{P}$, and $|\mathcal{D}| = n$, which include meta-training data from all tasks, the following holds simultaneously for all distributions $P_{\theta}$ over $\theta$:
    \begin{align*}
        \underset{\mathcal{D} \sim \mathcal{P}}{\mathbb{E}}
        ~~\underset{\theta \sim P_{\theta}}{\mathbb{E}}
        \mathcal{L}(h_{A (\theta, \mathcal{D})}, \mathcal{D}) \leq 
        \frac{1}{n} \sum_{i=1}^{n} \underset{\theta \sim P_{\theta}}{\mathbb{E}} 
        \mathcal{L}(h_{A (\theta, \mathcal{D})}, (x_i,y_i))
        +\sqrt{\frac{D_{\mathrm{KL}}(P_{\theta} \| P_{\theta, 0})+\ln \frac{2 \sqrt{n}}{\delta}}{2 n}}
        +\gamma_{\mathrm{A}}.
    \end{align*}
\end{lemma}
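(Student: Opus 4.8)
The plan is to instantiate the PAC-Bayes-with-stability bound of Lemma~\ref{lm:pac_bayes_ml} at the stationary point $\hat\theta$ and then convert the KL-divergence term into the explicit $\tilde{\mathcal O}(k/nM)$ expression in~\eqref{eq:thm_pac_bayes_ml_main} by choosing the posterior and prior to be isotropic Gaussians, following the perturbation-radius argument of~\cite{Foret}. First I would specialize Lemma~\ref{lm:pac_bayes_ml} with $n$ replaced by $nM$ (the total meta-training sample size $|\mathcal D| = nM$), so that the loss of interest becomes $F(\,\cdot\,;\mathcal D)$, the Sharp-MAML$_{\rm up}$ upper objective, whose per-task lower-level map $\theta_m'(\hat\theta;\mathcal D)$ is produced by the GD algorithm $A$ that is $\gamma_A$-uniformly stable. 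This directly supplies the stability term $\gamma_A$ and a generic $\sqrt{(D_{\rm KL}+\ln\frac{2\sqrt{nM}}{\delta})/(2nM)}$ term on the right-hand side; the remaining work is to pick the prior/posterior and absorb the Gaussian-perturbation gap into a maximum-over-ball term.

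Next I would take the posterior $P_\theta = \mathcal N(\hat\theta,\alpha^2 \mathbf I)$ centered at the learned $\hat\theta$, and a data-independent prior $P_{\theta,0}=\mathcal N(0,\sigma^2\mathbf I)$ whose variance $\sigma^2$ is chosen from a predetermined grid (so the union bound over grid values only inflates the logarithmic terms). For Gaussians the KL divergence is closed-form, $D_{\rm KL}(P_\theta\|P_{\theta,0}) = \frac{1}{2}\bigl(\frac{k\alpha^2+\|\hat\theta\|_2^2}{\sigma^2} - k - k\ln\frac{\alpha^2}{\sigma^2}\bigr)$, and the standard optimization of $\sigma^2$ over the grid (as in~\cite{Foret}) yields a bound of the form $\frac{1}{2}k\ln\bigl(1+\frac{\|\hat\theta\|_2^2}{\alpha^2}(1+\sqrt{\ln(nM)/k})^2\bigr)$ plus lower-order logarithmic corrections, matching the numerator inside the square root of~\eqref{eq:thm_pac_bayes_ml_main}. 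The expected-loss left-hand side $\mathbb E_{\epsilon\sim\mathcal N(0,\alpha^2\mathbf I)}[F(\hat\theta+\epsilon;\mathcal P)]$ is then lower-bounded by the population loss $F(\hat\theta;\mathcal P)$ using the stated assumption $F(\hat\theta;\mathcal P)\le \mathbb E_\epsilon[F(\hat\theta+\epsilon;\mathcal P)]$, while the empirical expectation $\mathbb E_{\epsilon}[F(\hat\theta+\epsilon;\mathcal D)]$ on the right is upper-bounded by $\max_{\|\epsilon\|_2\le\alpha}F(\hat\theta+\epsilon;\mathcal D)$ after separating the contribution of the Gaussian tail $\|\epsilon\|_2>\alpha$ using the boundedness $0\le F\le 1$ and a chi-square concentration estimate (this tail contribution is what produces the extra $\ln(nM)$ terms in the numerator).

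The main obstacle I anticipate is the careful bookkeeping needed to pass from the Gaussian expectation $\mathbb E_{\epsilon\sim\mathcal N(0,\alpha^2\mathbf I)}[F(\hat\theta+\epsilon;\mathcal D)]$ to the in-ball maximum $\max_{\|\epsilon\|_2\le\alpha}F(\hat\theta+\epsilon;\mathcal D)$: the perturbation radius $\alpha$ that defines the ball must be tied to the Gaussian variance $\alpha^2$ in a way that controls the out-of-ball mass of a $k$-dimensional isotropic Gaussian, and quantifying that mass (via a tail bound on a chi-squared random variable with $k$ degrees of freedom) is where the constants $5\ln(nM)$ and the $(1+\sqrt{\ln(nM)/k})^2$ factor arise. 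A secondary subtlety is verifying that Lemma~\ref{lm:pac_bayes_ml} applies to the bilevel object $F$ at all: one must check that the uniform stability of the lower-level GD updates $\theta_m'$ carries through to the composite upper loss, which is exactly the role played by the assumption that $A$ is $\gamma_A$-uniformly stable and that $0\le F(\theta_m';\mathcal D)\le 1$. Once these are in place, collecting terms and taking the square root gives~\eqref{eq:thm_pac_bayes_ml_main}, and the monotonicity-in-$\alpha$ claim (that some $\alpha_1>\alpha_0$ strictly improves the bound) follows by differentiating the right-hand side with respect to $\alpha$ near $0$, which I would defer to a short separate computation.
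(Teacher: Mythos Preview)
Your proposal does not prove the stated lemma; it proves a \emph{different} statement. Lemma~\ref{lm:pac_bayes_ml} is the PAC-Bayes-with-stability bound itself, and in the paper it is not proved at all: it is simply restated from~\cite{farid2021generalization} and used as a black box. What you have outlined---instantiating Lemma~\ref{lm:pac_bayes_ml} with $n\mapsto nM$, choosing Gaussian prior $\mathcal N(0,\sigma_P^2\mathbf I)$ and posterior $\mathcal N(\hat\theta,\alpha^2\mathbf I)$, optimizing $\sigma_P^2$ over a grid, using the chi-square tail bound to pass from $\mathbb E_\epsilon$ to $\max_{\|\epsilon\|\le\alpha}$, and invoking the local-minimum assumption $F(\hat\theta;\mathcal P)\le\mathbb E_\epsilon[F(\hat\theta+\epsilon;\mathcal P)]$---is the proof of Theorem~\ref{thm:pac_bayes}, not of Lemma~\ref{lm:pac_bayes_ml}. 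Indeed your very first sentence says ``instantiate the PAC-Bayes-with-stability bound of Lemma~\ref{lm:pac_bayes_ml},'' which presupposes the lemma rather than establishing it.

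If your intent was really to argue Theorem~\ref{thm:pac_bayes}, then your plan matches the paper's proof essentially step for step (Gaussian prior/posterior, KL formula, grid over $\sigma_P^2$ with union bound, the Laurent--Massart chi-square tail, boundedness of the loss to handle the out-of-ball event, and the flat-minimum assumption to drop the expectation on the population side). But as a proof of Lemma~\ref{lm:pac_bayes_ml} it is circular: you would need instead to combine a standard PAC-Bayes inequality (e.g., McAllester or Maurer) for the meta-level with the uniform-stability transfer argument of~\cite{farid2021generalization} that converts stability of the inner algorithm $A$ into the additive $\gamma_A$ term.
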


We can obtain a generalization bound below.
\begin{theorem}
    Assume loss function $\mathcal{L}(\cdot)$ is bounded: $0 \leq \mathcal{L}(\theta_m; \mathcal{D}) \leq 1$ for any $\theta_m$, and any $\mathcal{D}$, and
    $\mathcal{L}(\theta_m (\hat{\theta});\mathcal{P} ) \leq \mathbb{E}_{\epsilon \sim \mathcal{N}(\mathbf{0}, \alpha^2\mathbf{I})}[\mathcal{L}(\theta_m (\hat{\theta}+{\epsilon}) ;\mathcal{P})]$ 
    at the stationary point of the Sharp-MAML$_{\rm up}$ denoted by  $\hat{\theta}$.
    For parameter $\theta_m(\hat{\theta}; \mathcal{D})$ learned with $\gamma_{\mathrm{A}}$ uniformly stable algorithm $A$ 
    from $\hat{\theta} \in \mathbb{R}^k$, 
with probability $1-\delta$ over the choice of the training set $\mathcal{D} \sim \mathcal{P}$, with $|\mathcal{D}| = n$, it holds that
\begin{align}\label{eq:thm_pac_bayes_ml}
&\mathcal{L} (\theta_m(\hat{\theta});{\mathcal{P}}) \leq 
\max _{\|\epsilon\|_{2} \leq \alpha} \mathcal{L}(\theta_m(\hat{\theta}+{\epsilon}); \mathcal{D})
+ \gamma_{\rm A}+
\sqrt{\frac{ k \ln \Big(1+\frac{\|\hat{\theta}\|_{2}^{2}}{ \alpha^{2}} \Big(1+\sqrt{\frac{\ln n}{k}}\Big)^{2} \Big) + 2\ln \frac{1}{\delta} + 5 \ln n + \mathcal{O}(1)}{4n}}.   \nonumber
\end{align}
\end{theorem}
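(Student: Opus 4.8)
The plan is to derive the bound from the meta-learning PAC--Bayes inequality of Lemma~\ref{lm:pac_bayes_ml} by choosing a Gaussian posterior over the shared initialization, in the spirit of the proof of Theorem~2 in~\cite{Foret}, and then converting the Gaussian-averaged empirical loss into the $\ell_2$-ball worst-case loss appearing on the right-hand side. The one structural difference from~\cite{Foret} is that the ``hypothesis'' handed to the PAC--Bayes bound is the \emph{fine-tuned} parameter $\theta_m(\theta;\mathcal{D})$ produced by the inner, uniformly stable algorithm $A$, so the lower level contributes only the additive stability term $\gamma_A$, while the SAM perturbation acts purely on the initialization because we are analyzing Sharp-MAML$_{\rm up}$.

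First I would instantiate Lemma~\ref{lm:pac_bayes_ml} with posterior $P_\theta=\mathcal{N}(\hat\theta,\sigma^2 I_k)$ centred at the Sharp-MAML$_{\rm up}$ output and a data-independent prior $P_{\theta,0}=\mathcal{N}(0,\sigma_P^2 I_k)$, for which $D_{\rm KL}(P_\theta\|P_{\theta,0})=\tfrac12\big[k\sigma^2/\sigma_P^2+\|\hat\theta\|_2^2/\sigma_P^2-k+k\ln(\sigma_P^2/\sigma^2)\big]$. Since the prior must be fixed before seeing $\mathcal{D}$ but its scale should track $\|\hat\theta\|_2$, I would apply the bound simultaneously over a geometric grid of candidate values $\sigma_P^2$ with per-element failure probability $\propto\delta/j^2$ and take a union bound (the Langford--Caruana / Dziugaite--Roy device used in~\cite{Foret}); choosing the grid point nearest the balancing value $\sigma_P^2\approx\sigma^2+\|\hat\theta\|_2^2/k$ makes the first three KL terms cancel up to an additive $\mathcal{O}(1)$ plus a $5\ln n$ contribution, collapsing the KL to $\tfrac12 k\ln\!\big(1+\|\hat\theta\|_2^2/(k\sigma^2)\big)$.

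Next, on the empirical side, $\mathbb{E}_{\theta\sim P_\theta}\mathcal{L}(\theta_m(\theta);\mathcal{D})=\mathbb{E}_{\epsilon\sim\mathcal{N}(0,\sigma^2 I)}\mathcal{L}(\theta_m(\hat\theta+\epsilon);\mathcal{D})$, and I would invoke concentration of the norm of a $k$-dimensional Gaussian, $\|\epsilon\|_2\le\sigma\sqrt k\,(1+\sqrt{\ln(n)/k})$ with probability $\ge 1-1/\sqrt n$ up to constants. Setting $\sigma:=\alpha/\big(\sqrt k\,(1+\sqrt{\ln(n)/k})\big)$ makes this high-probability bound equal to $\alpha$, so, using $0\le\mathcal{L}\le 1$, $\mathbb{E}_{\epsilon}\mathcal{L}(\theta_m(\hat\theta+\epsilon);\mathcal{D})\le\max_{\|\epsilon\|_2\le\alpha}\mathcal{L}(\theta_m(\hat\theta+\epsilon);\mathcal{D})+1/\sqrt n$, and with this $\sigma$ the collapsed KL term becomes $\tfrac12 k\ln\!\big(1+\|\hat\theta\|_2^2\alpha^{-2}(1+\sqrt{\ln n/k})^2\big)$, the expression in the theorem. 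For the left-hand side, the quantity bounded by Lemma~\ref{lm:pac_bayes_ml} is $\mathbb{E}_{\mathcal{D}\sim\mathcal{P}}\mathbb{E}_{\theta\sim P_\theta}\mathcal{L}(\theta_m(\theta);\mathcal{D})=\mathbb{E}_{\epsilon\sim\mathcal{N}(0,\sigma^2 I)}\mathcal{L}(\theta_m(\hat\theta+\epsilon);\mathcal{P})$, which by the posited assumption $\mathcal{L}(\theta_m(\hat\theta);\mathcal{P})\le\mathbb{E}_{\epsilon\sim\mathcal{N}(\mathbf{0},\alpha^2\mathbf{I})}\mathcal{L}(\theta_m(\hat\theta+\epsilon);\mathcal{P})$ lower-bounds exactly $\mathcal{L}(\theta_m(\hat\theta);\mathcal{P})$, the target (passing through the slightly smaller variance $\sigma^2\le\alpha^2$ uses the same technical condition, as in~\cite{Foret}). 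The $\gamma_A$ term is carried over verbatim; for GD it is $\mathcal{O}(1/n)$ by~\cite{hardt2016train}, though the statement keeps it symbolic. Assembling these, multiplying numerator and denominator of the PAC--Bayes square-root by $2$, and folding $\ln 2$, $\tfrac12\ln n$ and the $1/\sqrt n$ slack into the constants produces the claimed $\sqrt{(k\ln(1+\cdots)+2\ln(1/\delta)+5\ln n+\mathcal{O}(1))/(4n)}$; the main-text version follows by $n\mapsto nM$ and $\mathcal{L}(\theta_m(\cdot))\mapsto F(\cdot)$.

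I expect the main obstacle to be the bookkeeping in the last two steps: matching the stated constants when (i) replacing the Gaussian posterior by the hard $\ell_2$-ball worst case --- which pins down $\sigma$ in terms of $\alpha$, $k$, $n$ --- and (ii) running the union bound over prior variances so that the final inequality is valid for the data-dependent $\|\hat\theta\|_2$ at the cost of only logarithmic terms; this is precisely where~\cite{Foret}'s Theorem~2 does its real work. The genuinely meta-learning-specific point, conceptually the subtler one, is that Lemma~\ref{lm:pac_bayes_ml} applies in this composed form at all: one must check that perturbing the initialization to $\hat\theta+\epsilon$ does not spoil the $\gamma_A$-uniform stability of the inner algorithm (it does not, since GD is $\gamma_A$-uniformly stable from every initialization with a common constant), so the lower-level adaptation enters only through the single additive $\gamma_A$.
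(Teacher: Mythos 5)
Your proposal is correct and follows essentially the same route as the paper's proof: instantiating the meta-learning PAC--Bayes lemma with a Gaussian posterior centred at $\hat\theta$, a union bound over a geometric grid of prior variances chosen to collapse the KL term, chi-squared concentration to pass from the Gaussian-averaged empirical loss to the $\ell_2$-ball maximum with $\sigma$ calibrated so that $\sigma^2 k\bigl(1+\sqrt{\ln n/k}\bigr)^2=\alpha^2$, and the stationary-point assumption to recover $\mathcal{L}(\theta_m(\hat\theta);\mathcal{P})$ on the left-hand side, with $\gamma_A$ entering additively from the uniformly stable lower level. The only step you leave implicit is the complementary case $\|\hat\theta\|_2^2>\alpha^2(\exp(4n/k)-1)$, which the paper treats separately by noting the right-hand side then exceeds $1$ so the bound holds trivially.
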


\begin{proof}
    Since Lemma~\ref{lm:pac_bayes_ml} holds for any prior $P_{\theta,0}$ and posterior $P_{\theta}$, let $P_{\theta,0} = P = \mathcal{N}( \mathbf{0}, \sigma_P^2 \mathbf{I})$, $P_{\theta} = Q = \mathcal{N}( \theta, \alpha^2 \mathbf{I})$, 
    then 
    \begin{align*}
        D_{KL}(Q \| P)
        &= {1 \over 2}\left\{\operatorname {tr} \left({\boldsymbol {\Sigma }}_{P}^{-1}{\boldsymbol {\Sigma }}_{Q}\right)+\left({\boldsymbol {\mu }}_{P}-{\boldsymbol {\mu }}_{Q}\right)^{\rm {T}}{\boldsymbol {\Sigma }}_{P}^{-1}({\boldsymbol {\mu }}_{P}-{\boldsymbol {\mu }}_{Q})-k+\ln {|{\boldsymbol {\Sigma }}_{P}| \over |{\boldsymbol {\Sigma }}_{Q}|}\right\}\\
        &=\frac{1}{2}\left[\frac{k \alpha^{2}+\left\|\theta\right\|_{2}^{2}}{\sigma_{P}^{2}}-k+k \ln \left(\frac{\sigma_{P}^{2}}{\alpha^{2}}\right)\right].
    \end{align*}

Let $T=\{c \exp ((1-j) / k) \mid j \in \mathbb{N}\}$ be the set of values for $\sigma_P^2$.
If for any $j \in \mathbb{N}$, the PAC-Bayesian bound 
in Lemma~\ref{lm:pac_bayes_ml}
holds for $\sigma_{P}^{2}=c \exp ((1- j) / k)$  with probability $1-\delta_{j}$ with $\delta_{j}=\frac{6 \delta}{\pi^{2} j^{2}}$, then by the union bound, all bounds w.r.t. all $\sigma_P^2 \in T$ hold simultaneously with probability at least $1-\sum_{j=1}^{\infty} \frac{6 \delta}{\pi^{2} j^{2}}=1-\delta$.

First consider $\|\theta\|^2 \leq \alpha^{2}(\exp (4 n / k)-1)$, 
then 
$k \alpha^{2}+\left\|\theta\right\|_{2}^{2} \leq k \alpha^{2}(\exp (4 n / k)+1)$.
Now set $j=\left\lfloor 1-k \ln \left(\left(\alpha^{2}+\|\theta\|_{2}^{2} / k\right) / c\right)\right\rfloor .$   By setting $c=\alpha^{2}(1+\exp (4 n / k))$, then $\ln \left(\left(\alpha^{2}+\|\theta\|_{2}^{2} / k\right) / c\right) <0$, thus we can ensure that $j \in \mathbb{N}$. Furthermore, for $\sigma_P^2 =c \exp ((1-j) / k) $, we have:
\begin{align*}
    \alpha^{2}+\|\theta\|_{2}^{2} / k 
    \leq \sigma_{P}^{2} 
    \leq \exp (1 / k)(\alpha^{2}+\|\theta\|_{2}^{2} / k )
\end{align*}
where the first inequality is derived from 
$1 - j  = \lceil k \ln ((\alpha^{2}+\|\theta\|_{2}^{2} / k) / c)\rceil \geq k \ln ((\alpha^{2}+\|\theta\|_{2}^{2} / k) / c) $, the second inequality is derived from
$1 - j  = \lceil k \ln ((\alpha^{2}+\|\theta\|_{2}^{2} / k) / c)\rceil \leq k \ln ((\alpha^{2}+\|\theta\|_{2}^{2} / k) / c) + 1$.

The KL-divergence term can be further bounded as
\begin{align*}
    D_{KL}(Q \| {P})
    &=\frac{1}{2}\left[\frac{k \alpha^{2}+\left\|\theta\right\|_{2}^{2}}{\sigma_{P}^{2}}-k+k \ln \left(\frac{\sigma_{P}^{2}}{\alpha^{2}}\right)\right] \\
    &\leq 
    \frac{1}{2}\left[\frac{k \alpha^{2}+\left\|\theta\right\|_{2}^{2}}{\alpha^{2}+\|\theta\|_{2}^{2} / k}-k+k \ln \left(\frac{\exp (1 / k)\left(\alpha^{2}+\|\theta\|_{2}^{2} / k\right)}{\alpha^{2}}\right)\right] \\
    &=
    \frac{1}{2}\left[k \ln \left(\frac{\exp (1 / k)\left(\alpha^{2}+\|\theta\|_{2}^{2} / k\right)}{\alpha^{2}}\right)\right] \\
    &=
    \frac{1}{2}\left[1+ k \ln \left(1+\frac{\|\theta\|_{2}^{2} }{k\alpha^{2}}\right)\right]. 
\end{align*}

Given the bound that corresponds to $j$ holds with probability $1-\delta_{j}$ for $\delta_{j}=\frac{6 \delta}{\pi^{2} j^{2}}$, the $\ln$ term above can be written as:
\begin{align*}
\ln \frac{1}{\delta_{j}} &=\ln \frac{1}{\delta}+\ln \frac{\pi^{2} j^{2}}{6} \\
& \leq \ln \frac{1}{\delta}+\ln \frac{\pi^{2} k^{2} \ln ^{2}\left(c /\left(\alpha^{2}+\|\theta\|_{2}^{2} / k\right)\right)}{6} \\
& \leq \ln \frac{1}{\delta}+\ln \frac{\pi^{2} k^{2} \ln ^{2}\left(c / \alpha^{2}\right)}{6} \\
& = \ln \frac{1}{\delta}+\ln \frac{\pi^{2} k^{2} \ln ^{2}(1+\exp (4 n / k))}{6} \\
& \leq \ln \frac{1}{\delta}+\ln \frac{\pi^{2} k^{2}(4 n / k)^{2}}{6}\leq \ln \frac{1}{\delta}+2 \ln (6 n).
\end{align*}

Therefore for $\epsilon \sim \mathcal{N}(\mathbf{0}, \sigma^2 \mathbf{I})$, the generalization bound is
\begin{align*}
    \mathbb{E}_{\epsilon \sim \mathcal{N}(\mathbf{0}, \sigma^2\mathbf{I})}\left[\mathcal{L}(\theta_m (\theta+{\epsilon}); \mathcal{P})\right] \leq 
    &\mathbb{E}_{\epsilon \sim \mathcal{N}(\mathbf{0}, \sigma^2\mathbf{I})}
    \left[\mathcal{L}(\theta_m (\theta+{\epsilon}); \mathcal{D})\right]+\sqrt{\frac{\frac{1}{2} k \ln \left(1+\frac{\|\theta\|_{2}^{2}}{k \sigma^{2}}\right)+\frac{1}{2}+\ln \frac{2\sqrt{n}}{\delta}+2 \ln (6 n)}{2n}}
    +\gamma_{\rm A} \\
    =
    &\mathbb{E}_{\epsilon \sim \mathcal{N}(\mathbf{0}, \sigma^2\mathbf{I})}
    \left[\mathcal{L}(\theta_m (\theta+{\epsilon}); \mathcal{D})\right]+\sqrt{\frac{\frac{1}{2} k \ln \left(1+\frac{\|\theta\|_{2}^{2}}{k \sigma^{2}}\right)+\frac{1}{2}
    + \ln 72 + \ln \frac{1}{\delta} + \frac{5}{2} \ln n }{2n}}
    +\gamma_{\rm A}.
\end{align*}

By Lemma 1 in \cite{laurent2000}, we have that for $\epsilon \sim \mathcal{N}(\mathbf{0}, \sigma^2\mathbf{I})$ and any positive $t$ :
\begin{align*}
    P\left(\|{\epsilon}\|_{2}^{2}-k \sigma^{2} \geq 2 \sigma^{2} \sqrt{k t}+2 t \sigma^{2}\right) \leq \exp (-t).
\end{align*}

Therefore, with probability $1-1 / \sqrt{n}$ we have that:
\begin{align*}
    \|\epsilon\|_{2}^{2} \leq \sigma^{2}(2 \ln (\sqrt{n})+k+2 \sqrt{k \ln (\sqrt{n})}) \leq \sigma^{2} k\left(1+\sqrt{\frac{\ln n}{k}}\right)^{2} = \alpha^{2}.
\end{align*}

At the stationary point $\hat{\theta}$ obtained by Sharp-MAML, we have
\begin{align*}
    \mathcal{L}(\theta_m (\hat{\theta}); \mathcal{P}) & \leq \mathbb{E}_{\epsilon \sim \mathcal{N}(\mathbf{0}, \alpha^2\mathbf{I})}\left[\mathcal{L}(\theta_m (\hat{\theta}+{\epsilon}); \mathcal{P})\right]
    \leq (1-1 / \sqrt{n}) \max _{\|\epsilon\|_{2} \leq \alpha} \mathcal{L}(\theta_m (\hat{\theta}+{\epsilon}); \mathcal{D})   +1 / \sqrt{n}  \\
    &+\sqrt{\frac{\frac{1}{2} k \ln \Big(1+\frac{\|\hat{\theta}\|_{2}^{2}}{k \sigma^{2}}\Big)+\frac{1}{2}
    + \ln 72 + \ln \frac{1}{\delta} + \frac{5}{2} \ln n }{2n}}
    +\gamma_{\rm A} \\
    & \leq \max _{\|\epsilon\|_{2} \leq \alpha} \mathcal{L}(\theta_m (\hat{\theta}+{\epsilon}); \mathcal{D})+
    \sqrt{\frac{ k \ln \Big(1+\frac{\|\hat{\theta}\|_{2}^{2}}{ \alpha^{2}} \Big(1+\sqrt{\frac{\ln n}{k}}\Big)^{2} \Big)+14 + 2\ln \frac{1}{\delta} + 5 \ln n }{4n}}
    +\gamma_{\rm A}
\end{align*}
where the last inequality holds due to  $1-1/\sqrt{n} \leq 1$ and Jensen's inequality.

And then consider $\|\hat{\theta}\|^2 > \alpha^{2}(\exp (4 n / k)-1)$, apparently in \eqref{eq:thm_pac_bayes_ml_main}, 
the RHS
\begin{align*}
&\max _{\|\epsilon\|_{2} \leq \alpha} \mathcal{L}(\theta_m (\hat{\theta}+{\epsilon}); \mathcal{D})+
    \sqrt{\frac{ k \ln \Big(1+\frac{\|\hat{\theta}\|_{2}^{2}}{ \alpha^{2}} \Big(1+\sqrt{\frac{\ln n}{k}}\Big)^{2} \Big)+14 + 2\ln \frac{1}{\delta} + 5 \ln n }{4n}}
    +\gamma_{\rm A}
     \\
  >  &\max _{\|\epsilon\|_{2} \leq \alpha} \mathcal{L}(\theta_m (\hat{\theta}+{\epsilon}); \mathcal{D})+
\sqrt{\frac{ 4n+14 + 2\ln \frac{1}{\delta} + 5 \ln n}{4n}}
+\gamma_{\rm A} \\
>& \max _{\|\epsilon\|_{2} \leq \alpha} \mathcal{L}(\theta_m (\hat{\theta}+{\epsilon}); \mathcal{D})+ 1 +\gamma_{\rm A}\\ \geq & \mathcal{L} (\theta_m (\hat{\theta}); \mathcal{P})    
\end{align*}
which completes the proof.
\end{proof}

\subsection{Discussion: choice of the perturbation radius $\alpha$}
The upper bound of the population loss on the RHS of \eqref{eq:thm_pac_bayes_ml_main}, is a function of $\alpha$. 
A choice of $\alpha > 0$ close to zero,  approximates the original MAML method without SAM.
We explain why SAM improves the generalization ability of MAML
by showing that for any sufficiently small $\alpha_0>0$, we can find $\alpha_1 > \alpha_0$ where the upper bound of the population loss takes smaller value than at $\alpha_0$. 

\begin{proof}
Let $c = {\|\theta\|_{2}^{2}} \big(1+\sqrt{\frac{\ln n}{k}}\big)^{2}$.
Denote 
$$g(\alpha) = \max _{\|\epsilon\|_{2} \leq \alpha} \mathcal{L}(\theta+{\epsilon}; \mathcal{D}) + \sqrt{\frac{ k \ln (1+\frac{c}{ \alpha^{2}} ) + 2\ln \frac{1}{\delta} + 5 \ln n + O(1)}{4n}} + \gamma_{\rm A}.$$
Since $0 \leq \mathcal{L}(\cdot ) \leq 1 $, it follows that
for any $0<\alpha_0 <  (\frac{c}{\exp(4n/k) - 1})^{1/2}$,
\begin{align*}
 g(\alpha_0)  \geq
\sqrt{\frac{ k \ln \big(1+\frac{c}{ \alpha_0^{2}} \big) + 2\ln \frac{1}{\delta} + 5 \ln n + O(1)}{4n}}
+\gamma_{\rm A}    .
\end{align*}

Choose 
\begin{align*}
 \alpha_1 > \Big(\frac{c}{\big(1+\frac{c}{ \alpha_0^{2} } \big) \exp(-4n/k) - 1} \Big)^{1/2}  > \Big(\frac{c}{\big(1+\frac{c}{ \alpha_0^{2} } \big) - 1} \Big)^{1/2}
 = \alpha_0
\end{align*}
then it follows that
\begin{align*}
    g(\alpha_1)
    &\leq 1 + 
    \sqrt{\frac{ k \ln \big(1+\frac{c}{ \alpha_1^{2}} \big) + 2\ln \frac{1}{\delta} + 5 \ln n + O(1)}{4n}}
    +\gamma_{\rm A}  \\ 
    &< 1 + 
    \sqrt{\frac{ k \ln \big(\big(1+\frac{c}{ \alpha_0^{2} } \big) \exp(-4n/k) \big) + 2\ln \frac{1}{\delta} + 5 \ln n + O(1)}{4n}}
    +\gamma_{\rm A} \\
    &= 1 + 
    \sqrt{\frac{ -4n + k \ln \big(1+\frac{c}{ \alpha_0^{2} } \big)  + 2\ln \frac{1}{\delta} + 5 \ln n + O(1)}{4n}}
    +\gamma_{\rm A} \\
    &\leq 
    \sqrt{\frac{ k \ln \big(1+\frac{c}{ \alpha_0^{2} } \big)  + 2\ln \frac{1}{\delta} + 5 \ln n + O(1)}{4n}}
    +\gamma_{\rm A}\\
   & \leq  g(\alpha_0)
\end{align*}
which completes the proof.
\end{proof}

\subsection{Discussion: justification on the assumption}

To obtain the generalization bound, we assume the population loss 
\begin{equation}
    \mathcal{L}(\theta_m (\hat{\theta});\mathcal{P} ) \leq \mathbb{E}_{\epsilon \sim \mathcal{N}(\mathbf{0}, \alpha^2\mathbf{I})}[\mathcal{L}(\theta_m (\hat{\theta}+{\epsilon}) ;\mathcal{P})]
\end{equation}
at the stationary point of the Sharp-MAML$_{\rm up}$ denoted by  $\hat{\theta}$.
We give some discussion next to justify this assumption.

If $\hat{\theta}$ is the local minimizer of $\mathcal{L}(\theta_m(\hat{\theta});\mathcal{D})$, then with high probability, $\|\epsilon\|^2 \leq \alpha^2$, $\mathcal{L}(\theta_m(\hat{\theta});\mathcal{D}) \leq \mathbb{E}_{\epsilon \sim \mathcal{N}(\mathbf{0}, \alpha^2\mathbf{I}) }[\mathcal{L}(\theta_m(\hat{\theta});\mathcal{D})]$.
Assume the empirically observed $\mathcal{D}$ is representative of $\mathcal{P}$ and preserves the local property  of the loss landscape $\mathcal{L}(\theta_m(\hat{\theta});\mathcal{P})$ around $\hat{\theta}$, i.e.
for $\mathcal{D}\sim \mathcal{P}$, $|\mathcal{D}| \to \infty$, $\mathcal{L}(\theta_m(\hat{\theta});\mathcal{D}) \leq \mathbb{E}_{\epsilon \sim \mathcal{N}(\mathbf{0}, \alpha^2\mathbf{I}) }[\mathcal{L}(\theta_m(\hat{\theta});\mathcal{D})]$ with high probability,
then  we have
$\mathcal{L}(\theta_m(\hat{\theta});\mathcal{P}) \leq \mathbb{E}_{\epsilon \sim \mathcal{N}(\mathbf{0}, \alpha^2\mathbf{I}) }[\mathcal{L}(\theta_m(\hat{\theta});\mathcal{P})]$.


\begin{table}[h]
\caption{Results on Omniglot (20-way 1-shot).}
\label{table3}
\begin{center}
\begin{small}
\begin{sc}
\begin{tabular}{lcccr}
\toprule
 Algorithms & Accuracy \\
\midrule
Matching Nets                         & 93.8$\%$ \\
Reptile \citep{nichol2018_reptile}       & 89.43$\%$ \\
FOMAML \citep{nichol2018_reptile}       & 89.40$\%$ \\
MAML (reproduced)                         & 91.77 $\%$ \\
Sharp-MAML$_{\rm low}$                    & 92.89  $\%$ \\
Sharp-MAML$_{\rm up}$                    & 92.96  $\%$ \\
Sharp-MAML$_{\rm both}$                     & {93.47} $\%$ \\
\bottomrule
\end{tabular}
\end{sc}
\end{small}
\end{center}
\vskip -0.1in
\end{table}

\begin{table}[h]
\caption{Results on Omniglot (20-way 5-shot).}
\label{table4}
\vskip 0.15in
\begin{center}
\begin{small}
\begin{sc}
\begin{tabular}{lc}
\toprule
 Algorithms & Accuracy \\
\midrule
Matching Nets                         & 98.50$\%$ \\
FOMAML \citep{nichol2018_reptile}       &  97.12$\%$ \\
Reptile \citep{nichol2018_reptile}       & 97.90$\%$ \\
MAML (reproduced)                             & 96.16$\%$ \\
Sharp-MAML$_{\rm low}$                    & 96.59$\%$ \\
Sharp-MAML$_{\rm up}$                    & 96.62$\%$ \\
Sharp-MAML$_{\rm both}$                     & 96.64 $\%$ \\
\bottomrule
\end{tabular}
\end{sc}
\end{small}
\end{center}
\vskip -0.1in
\end{table}

\section{Additional Experiments} \label{appendixE}
In this section, we provide additional details of the experimental set-up and present our results on the Omniglot dataset. 


{\bf Few-shot classification on Omniglot dataset.} We used the same experimental setups in \cite{Finn}. We use only one inner gradient step with 0.1 learning rate for all our experiments for training and testing. The batch size was set to 16 for the 20-way learning setting. Following \cite{Ravi}, 15 examples per class were used to evaluate the post-update meta-gradient. The values of $\alpha_{\rm low}$ and $\alpha_{\rm up}$ are chosen from the grid search on the set $\{0.05, 0.005, 0.0005, 0.00005\}$ and each experiment is run on each value for three random seeds. We choose the inner gradient steps from a set of $\{3,5,7,10\}$. The step size is chosen via the grid search from a set of $\{0.1, 0.01, 0.001\}$. For Sharp-MAML$_{\rm both}$ we use the same value of $\alpha_{\rm low}$ and $\alpha_{\rm up}$ in each experiment.  The reproduced result of MAML for the 20-way 1-shot setting is close to that of MAML++ \citep{Antoniou}. 
For the 20-way 1-shot setting, we observe a similar trend where Sharp-MAML$_{\rm both}$ achieves the best accuracy of $93.47\%$ as compared to $91.77\%$ of MAML. The performance gain of Sharp-MAML on the Omniglot dataset is not as significant as the Mini-Imagenet dataset because the former task is much simpler.

\end{document}